\documentclass[journal]{IEEEtran}
\usepackage{amsfonts}
\usepackage{cite,graphicx,amsmath,amsthm}
\usepackage{fancyhdr}
\usepackage{dsfont}
\usepackage{array,color}
\usepackage{bm}
\usepackage{float}
\usepackage{algpseudocode}
\usepackage{multirow}
\usepackage{booktabs}
\usepackage{makecell}
\usepackage{hyperref}
\allowdisplaybreaks[4]
\usepackage{amsmath,amssymb, graphicx, epstopdf,cite,enumerate,booktabs, setspace, float,stfloats,bm, multirow, lscape, caption, subcaption, graphbox, soul, color, xcolor, hyperref}
\usepackage{diagbox}
\hypersetup{hidelinks}
\usepackage{cite, amsmath,amssymb, bm}
\usepackage{bbding}
\usepackage{amsthm}
\usepackage[normalem]{ulem}
\usepackage[linesnumbered,ruled]{algorithm2e}

\newtheorem{lemma}{Lemma}

\newtheorem{proposition}{Proposition}

\newtheorem{corollary}{Corollary}

\newtheorem{property}{Property}

\newtheorem{remark}{Remark}

\newtheorem{claim}{Claim}

\begin{document}

\title{\huge
Communication Efficient Robotic Mixed Reality with Gaussian Splatting Cross-Layer Optimization
}

\author{
Chenxuan Liu$^{*}$, He Li$^{*}$, Zongze Li, Shuai Wang$^{\dag}$, Wei Xu, Kejiang Ye,\\Derrick Wing Kwan Ng,~\emph{Fellow, IEEE}, and 
Chengzhong Xu,~\emph{Fellow, IEEE}
\thanks{
A preliminary version of this paper was presented at the 2025 IEEE International Conference on Computer Communications (INFOCOM) Workshop on NetRobiCS \cite{infocom}.

This work is supported by the National Natural Science Foundation of China (Grant No. 62371444), Guangdong Basic and Applied Basic Research Foundation (No. 2023B1515130002), Guangdong Special Support Plan (No. 2021TQ06X990), and the Shenzhen Science and Technology Program (Grant No. JCYJ20241202124934046, JCYJ20220818101610023, RCYX20231211090206005, RCBS20231211090517022).

Chenxuan Liu, Shuai Wang, and Kejiang Ye are with the Shenzhen Institutes of Advanced Technology, Chinese Academy of Sciences, Shenzhen, China.
Chenxuan Liu is also with the University of Chinese Academy of Sciences. 

He Li and Chengzhong Xu are with the State Key Laboratory of IOTSC, Department of Computer and Information Science, University of Macau, Macau, China.

Zongze Li is with Peng Cheng Laboratory, Shenzhen, China.

Wei Xu is with the Manifold Tech Limited, Hong Kong, China.

Derrick Wing Kwan Ng is with the School of Electrical Engineering and Telecommunications, the University of New South Wales, Australia.

Corresponding author: Shuai Wang ({\tt\footnotesize s.wang@siat.ac.cn}). 

$^*$These authors contribute equally. 
}
}
\maketitle

\begin{abstract}
Realizing low-cost communication in robotic mixed reality (RoboMR) systems presents a challenge, due to the necessity of uploading high-resolution images through wireless channels. This paper proposes Gaussian splatting (GS) RoboMR (GSMR), which enables the simulator to opportunistically render a photo-realistic view from the robot's pose by calling ``memory'' from a GS model, thus reducing the need for excessive image uploads. 
However, the GS model may involve discrepancies compared to the actual environments. To this end, a GS cross-layer optimization (GSCLO) framework is further proposed, which jointly optimizes content switching (i.e., deciding whether to upload image or not) and power allocation (i.e., adjusting to content profiles) across different frames by minimizing a newly derived GSMR loss function. 
The GSCLO problem is addressed by an accelerated penalty optimization (APO) algorithm that reduces computational complexity by over $10$x compared to traditional branch-and-bound and search algorithms. 
Moreover, variants of GSCLO are presented to achieve robust, low-power, and multi-robot GSMR.
Extensive experiments demonstrate that the proposed GSMR paradigm and GSCLO method achieve significant improvements over existing benchmarks on both wheeled and legged robots in terms of diverse metrics in various scenarios. 
{For the first time, it is found that RoboMR can be achieved with ultra-low communication costs, and mixture of data is useful for enhancing GS performance in dynamic scenarios.} 
\end{abstract}

\begin{IEEEkeywords}
Cross-layer optimization, Gaussian splatting, mixed reality, resource allocation.
\end{IEEEkeywords}

\section{Introduction}

Embodied artificial intelligence has greatly benefited from leveraging simulation to train and test robot learning systems \cite{li2021igibson}. 
Indeed, simulators are capable of computing robot motions, state evolutions, and sensor measurements that arise from interactions between robots and environments. 
However, a significant gap exists between the simulated and real-world data \cite{wang2022federated,morra2019building}, and it is imperative to integrate real data into simulations to enhance the realism of scenarios that can be studied.

Robotic mixed reality (RoboMR) \cite{morra2019building,delmerico2022spatial,li2024seamless}, 
which facilitates interactions between real robots and virtual agents in a shared symbiotic world, emerges as a promising solution to address the above issue.
By synchronizing digital twins across simulation servers and robotic devices, RoboMR harmonizes spatial representations, leading to a paradigm shift where previously isolated virtual and real agents can now collaborate seamlessly in a metaverse space. 
However, challenges persist as simulators at the server and robots at the edge remain physically isolated, and establishing low-cost communication between them is non-trivial \cite{morra2019building,delmerico2022spatial,li2024seamless,guo2020adaptive,bastug2017toward,chen2018vr}, due to the vast data volume of high-frequency high-resolution sensor data (e.g., images). 
Existing RoboMR resource allocation approaches \cite{zheng2016wireless,yu2004iterative,zhang2024efficient,bastug2017toward,chen2018vr,Cakir2024IDTC, Jiang2023QoE,Cheng2024resource,Zhao2024Adaptive,Pan2023joint} involve the following drawbacks: 
1) They fail in fully exploiting the memory information recorded at previous RobotMR executions; 
2) They ignore the inter-dependency between high-level data encoding/decoding at the application layer and low-level communication design at the physical layer; 
3) They are mostly tested in simulation, with a significant lack of testing on real-world RoboMR datasets.

This paper proposes the concept of Gaussian splatting (GS) RoboMR (or GSMR for short), which leverages a GS model to opportunistically generate a photo-realistic view from the robot’s pose, thus avoiding excessive uploading of images. 
This GS model acts as a ``memory system'' for RoboMR, but its rendering may involve discrepancies compared to the actual environments (i.e., termed ``memory bias''), primarily due to the variations in illuminations and changes of environments. 
To this end, we propose a GS cross-layer optimization (GSCLO) framework to jointly optimize content switching (i.e., whether to upload image or not) and transmit power (i.e., adjusting to content profiles) across different frames, by minimizing a newly derived GSMR loss. We further propose an accelerated penalty optimization (APO) algorithm to solve the GSCLO problem, which reduces computational complexity by orders of magnitude compared to conventional branch-and-bound \cite{diamond2016cvxpy} and local search \cite{neumann2007randomized} algorithms, while achieving close-to-optimal performance as evidenced by experiments.
In addition, we extend the method to the cases under imperfect channel states, power minimization criteria, and multi-robot interference.
Finally, we implement RoboMR in Robot Operation System (ROS) and evaluate the performance gain brought by GSMR and GSCLO compared to existing RoboMR systems \cite{delmerico2022spatial,li2024seamless} and resource allocation algorithms \cite{zhang2024efficient,yu2004iterative,zheng2016wireless}. 
Specifically, GSMR reduces the power consumption by $75\,\%$ compared with RoboMR, and the PSNR of GSCLO is at least $5$\,dB higher than those without cross-layer optimization.

\begin{table*}[!t]
\caption{Summary of Important Variables and Parameters}
\centering
\scalebox{1}{
\begin{tabular}{|l|l|l|}
\hline
\textbf{Symbol} & \textbf{Type} & \textbf{Description} \\
\hline
$p_{t}\in\mathbb{R}_+$  & Variable & Transmit power (in $\mathrm{Watt}$) of frame $t$. \\
$x_{t}\in\mathbb{R}_+$  & Variable & Content switching ($\in\{0,1\}$) of frame $t$. \\
\hline
$\mathbf{s}_{t}\in\mathbb{R}^3$  & Data & Robot pose $\mathbf{s}_{t}=(a_{t},b_{t},\theta_{t})$, where $(a_{t},b_{t})$ and $\theta_{t}$ are position and orientation, respectively. \\
$\mathbf{u}_{t}\in\mathbb{R}^2$  & Data & Robot action $\mathbf{u}_{t}=(v_{t},\psi_{t})$, where $v_{t}$ and $\psi_{t}$ are linear and angular velocities, respectively.\\
$(\mathbf{r}_t,\mathbf{v}_t,\mathbf{m}_t)$ & Data & Real, virtual, and MR images.
\\
$(\widehat{\mathbf{r}}_t,\widehat{\mathbf{m}}_t)$ & Data & Recovered real and MR images.
\\
$(\mathbf{c}_t,\mathbf{d}_t)$ & Data & Encoded image and mask vector.
\\
\hline
$(E_t,D_t)$ & Function & Encoder and decoder functions.
\\
$\mathcal{L}$ & Function & Loss function of the RoboMR system.
\\
\hline
$T$ & Parameter & Number of frames.
\\
$\tau$ & Parameter & Duration between consecutive time slots (in s).
\\
$(L,W)$ & Parameter & Image length and width.
\\
$(h_t,\Tilde{h}_{t},\Delta h_t)$ & Parameter & Uplink channel, estimated channel, and estimation error of frame $t$. \\
$B$ & Parameter & Bandwidth allocated to the robot (in $\mathrm{Hz}$).\\
$\sigma^2$ & Parameter & External interference plus noise power (in $\mathrm{Watt}$). \\
$R_{t}$ & Parameter & Achievable rate (in $\mathrm{bps}$) of the robot.\\
$P$ & Parameter & Power budget (in $\mathrm{Watt}$) of the robot.\\
$L_{t}$ & Parameter & GS loss of frame $t$.\\
$(I,S)$ & Parameter & Data size (in bits) of robot image and robot state, respectively.\\
\hline
\end{tabular}
}
\end{table*}

Our contributions are summarized as follows:
\begin{itemize}
    \item We introduce the concept of GS to support MR with ultra-low communication costs ($<200$\,bits/frame), leading to a paradigm shift towards mixture of data for robot learning.
    \item A cross-layer optimization framework, termed GSCLO, is proposed, which effectively mitigates the discrepancies between GS-rendered images and real-world environments, thereby improving the image quality compared to existing GS. 
    \item A robust GSCLO algorithm is proposed to ensure reliable GSMR under channel uncertainties and a closed-form GS power saving factor is derived, which theoretically quantifies the benefits brought by our approach.
    \item We implement the GSMR system in ROS on both wheeled and legged robots. 
    Extensive experimental results show significant improvements over existing schemes in terms of diverse metrics.
\end{itemize}

The remainder of this paper is organized as follows.
Section \ref{section2} reviews related work. 
Section \ref{section3} states the problem. 
Section \ref{section4} describes the GSCLO scheme and APO algorithm.
Subsequently, Section~\ref{section5} presents the robust GSCLO under channel uncertainties. 
Section~\ref{section6} presents the power minimization GSCLO and multi-robot GSCLO schemes. 
Section \ref{section7} presents experimental results. 
Finally, Section~\ref{section8} concludes this work.

\emph{Notation}.
Italic, simple bold, capital bold, and curlicue letters represent scalars, vectors, matrices, and sets, respectively.
$\nabla f$ represents the gradient of a function $f$.
$\mathcal{U}(x,y)$ represents a uniform distribution within interval $[x,y]$ and $\mathcal{CN}(0,1)$ is standard normal distribution.
$\mathbb{R}$ and $\mathbb{C}$ represent real and complex fields, respectively. Finally, $\|\cdot\|$ denotes the vector norm function and $\odot$ is the Hadamard product for element-wise matrix multiplication. 
Important variables and parameters are summarized in Table I.

\section{Related Work}\label{section2}

RoboMR systems can be categorized into non-interactive \cite{surfelgan,aads} and interactive approaches \cite{li2021igibson,delmerico2022spatial,safety-ar,dense-rl,li2024seamless}.
Non-interactive RoboMR processes off-the-shelf datasets without virtual reality synchronization, which are adopted for perception data augmentation \cite{surfelgan,aads}.
On the other hand, interactive RoboMR leads to a paradigm shift where previous isolated virtual and real agents are now synchronized for cooperation and competition in a shared scenario \cite{delmerico2022spatial,li2021igibson,safety-ar,dense-rl}.
However, interactive RoboMR systems require a large volume of sensor data sharing between the server and robot, which poses a great challenge to communication.

To address the above challenge, resource allocation for RoboMR becomes imperative \cite{bastug2017toward}. 
In contrast to conventional communication systems, RoboMR systems aim to maximize the MR fidelity instead of the communication throughput.
Therefore, RoboMR resource allocation becomes very different from traditional resource allocation schemes that merely consider the wireless channel state information \cite{yu2004iterative,zheng2016wireless}.
For instance, the water-filling scheme allocates more resources to better channels for throughput maximization \cite{yu2004iterative}, and the max-min fairness scheme allocates more resources to worse channels to maintain certain quality of service \cite{zheng2016wireless}.
Indeed, these schemes could lead to poor performance in RoboMR systems because they do not account for the MR factors such as video complexities and quality of experience (QoE).
This ignites research efforts \cite{Zhao2024Adaptive, Cakir2024IDTC, Jiang2023QoE,Cheng2024resource,Pan2023joint} on MR oriented resource allocators. 
For instance, an intelligent digital twin communication framework was proposed for DT synchronization within resource-constrained networks \cite{Cakir2024IDTC}.
Also, a joint resource allocation and service selection scheme was designed to maximize the QoE utility for MR systems \cite{Jiang2023QoE}. 
Furthermore, learning-based resource allocators have also been exploited for optimizing the frame rate, user association, and link allocation in MR systems \cite{chen2018vr,Cheng2024resource,Zhao2024Adaptive,Pan2023joint}.

Nonetheless, existing methods mainly focus on image compression and rate control paradigms, which fail in fully exploiting the memory recorded at previously RoboMR executions. 
Leveraging these memories and the recent advancements in building world models (i.e., GS \cite{kerbl20233d,keetha2024splatam}), the server is able to ``imagine'' an image from a new viewpoint adjacent to the recording position. 
Therefore, no image uploading is needed when the memory is clear enough.
Furthermore, current RoboMR resource allocation algorithms either focus on the physical layer \cite{yu2004iterative,zheng2016wireless} or the high-level layers \cite{bastug2017toward,chen2018vr,Cakir2024IDTC, Jiang2023QoE,Cheng2024resource,Zhao2024Adaptive,Pan2023joint}, without forming a unified cross-layer framework. 
Finally, current methods are mostly tested in simulation.
To verify the effectiveness and robustness of RoboMR, it is necessary to develop real-time algorithms and evaluate the methods in real-world RoboMR systems.

Here, we propose a novel memory-assisted RoboMR paradigm, termed GSMR, which leverages a GS model to avoid excessive image uploads. 
We further propose a GSCLO framework to jointly optimize physical layer and application layer variables across different frames.
Finally, we implement the GSMR paradigm and GSCLO algorithm in ROS, and aim to provide a concrete step toward RoboMR in the real world.

\begin{figure*}[t]
    \centering
    \includegraphics[width=0.98\textwidth]{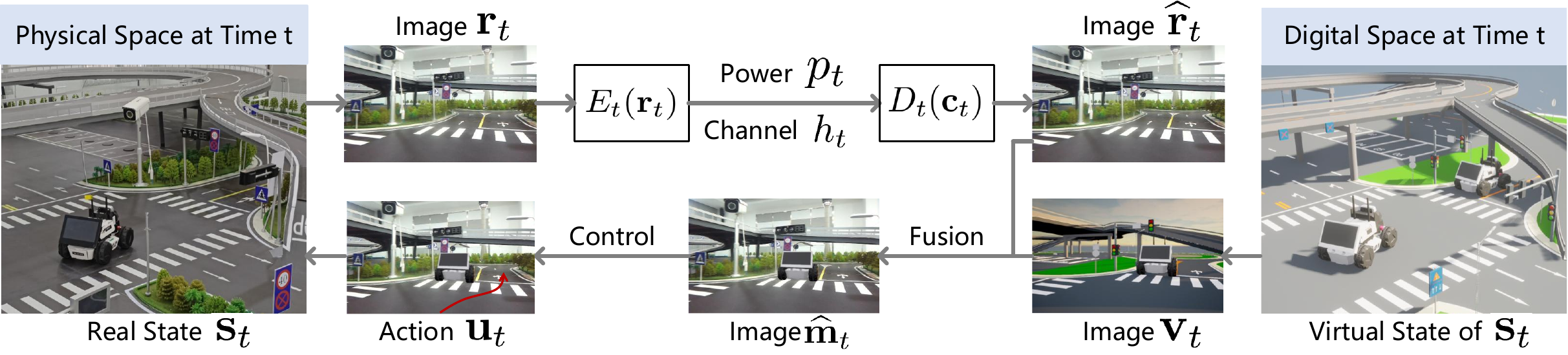}
    \caption{System model of RoboMR with cross-layer optimization of $\{E_t, D_t\}_{t=1}^T$ and $\{p_t\}_{t=1}^T$.}
    \label{fig:model}
\end{figure*}

\section{System Model and Problem Formulation}\label{section3}

\subsection{System Overview}
As shown in Fig.~1, a RoboMR system consists of a mobile robot acting as a real agent and a simulation server constructing the virtual world. 
The operation of RoboMR is discretized into $T$ time slots, where the duration of each time slot is $\tau$.
At the $t$-th time slot, $t \in\{1, \ldots, T\}$, 
the robot state is ${\mathbf{s}}_t=({a}_{t},{b}_{t},{\theta}_{t})$, where $(a_t,b_t)$ and $\theta_t$ denote the two-dimensional position and orientation, respectively. The interaction between the robot and the simulator involves four consecutive stages: 
\begin{itemize}
    \item[1)] \emph{Sensing}: A pair of real $\mathbf{r}_t\in\mathbb{R}^{3LW}$ and virtual $\mathbf{v}_t\in\mathbb{R}^{3LW}$ camera images are generated at the robot and the server, respectively, where $L$ and $W$ are the image length and width respectively, and the coefficient $3$ accounts for the red green blue (RGB) channels.
    \item[2)] \emph{Communication}: The robot uploads $\mathbf{r}_t$ to the server, which is detailed in Section III-B.   
    \item[3)] \emph{Fusion}: The image $\mathbf{r}_t$ is aggregated with $\mathbf{v}_t$ to generate MR image $\mathbf{m}_t= 
\mathbf{d}_t\odot\mathbf{r}_t+
    (\mathbf{1}-\mathbf{d}_t)
\odot\mathbf{v}_t$, where $\mathbf{d}_t$ is the mask (binary vector) of a real image for extracting the background and $(\mathbf{1}-\mathbf{d}_t)$ is the mask of a virtual image for extracting the virtual object.
    \item[4)] \emph{Control}: The MR image $\mathbf{m}_t$ is fed to the robot for generating control command 
$\mathbf{u}_t=(v_t,\phi_t)$, where $v_t$ and $\phi_t$ represent the linear and angular velocities, respectively.\footnote{Note that robot control $\mathbf{u}_t$ needs to avoid any potential collision with both virtual and real agents in the MR image $\mathbf{m}_t$.} 
\end{itemize}

After the above interaction, the subsequent state is 
${\mathbf{s}_{t + 1}} = {\mathbf{s}_t} + F({\mathbf{s}_t},{\mathbf{u}_t})\tau$, where $F({\mathbf{s}_t},{\mathbf{u}_t})$ is the discrete-time kinematic model of the robot \cite{han2023rda}.
This completes one loop of RoboMR and the time index is updated as $t\leftarrow t+1$. 
The entire procedure terminates when $t=T$, and the robot learning dataset collected in RoboMR is $\{\mathbf{m}_t,\mathbf{u}_t\}_{t=1}^T$.

\subsection{Communication Model}

To generate $\mathbf{m}_t$, streaming data $\{\mathbf{r}_t\}_{t=1}^T$ needs to be uploaded over wireless communication, as the robot and simulator are physically isolated.
This requires a pair of image encoder and decoder functions $(E_t,D_t)$, where the encoded vector $\mathbf{c}_t=E_t(\mathbf{r}_t)$ and the decoded vector
$\widehat{\mathbf{r}}_t=D_t(\mathbf{c}_t)$, 
such that $\widehat{\mathbf{r}}_t$ is close to $\mathbf{r}_t$.
Accordingly, the MR image is 
\begin{align}
\widehat{\mathbf{m}}_t(E_t,D_t)= 
\mathbf{d}_t\odot \widehat{\mathbf{r}}_t+
    (\mathbf{1}-\mathbf{d}_t)
\odot\mathbf{v}_t. \label{mthat}
\end{align}
This $\widehat{\mathbf{m}}_t$ may differ from the desired image $\mathbf{m}_t$. 
To measure their difference, we adopt loss function 
 \begin{align}\label{LGS}
\mathcal{L}\left(\mathbf{m}_t,\widehat{\mathbf{m}}_t\right)
=(1-\lambda)\|\mathbf{m}_t-\widehat{\mathbf{m}}_t\|_1 +
  \lambda \, {\mathsf{DS}}(\mathbf{m}_t,\widehat{\mathbf{m}}_t),
\end{align}
where the weight $\lambda\in[0,1]$ is set to $\lambda=0.2$ according to \cite{kerbl20233d} and the DSSIM function ${\mathsf{DS}}$ is 
\begin{align}
{\mathsf{DS}}(\mathbf{m}_t,\widehat{\mathbf{m}}_t)
=1-\mathsf{SSIM} (\mathbf{m}_t,\widehat{\mathbf{m}}_t),
\end{align}
with $\mathsf{SSIM}$ being the structural similarity index measure (SSIM) function detailed in \cite[Eqn. 5]{wang2011ssim}.

The wireless channels are assumed to be quasi-static during each time slot, and vary in different time slots. 
Let $h_{t}\in\mathbb{C}$ and $|h_{t}|^2$ denote the uplink channel coefficient and gain from the robot to the server at time slot $t$. 
The channel gains $\{|h_{t}|^2\}_{t=1}^T$ can be pre-determined as detailed in Section V.
The achievable rate in bits/s between the robot and the server at time slot $t$ is 
\begin{align}
R_{t}(p_{t})=B\mathrm{log}_2\left(1+\frac{|h_{t}|^2p_{t}}{\sigma^2}\right),
\end{align}
where $p_{t}$ and $B$ denote the transmit power and bandwidth of the robot at time slot $t$, respectively, and $\sigma^2$ denotes the power of external interference plus additive white Gaussian noise (AWGN) at the server. 
Let $C_t$ denote the data volume of $\mathbf{c}_t$, which is a function of the encoder $E_t$. To guarantee successful transmission of $\mathbf{c}_t$, we must have $\tau R_{t}(p_{t})\geq C_t(E_t), \forall t$.

\subsection{Cross-Layer Optimization}

Having the communication workload satisfied, our goal is to minimize the average distortion between the ground truth MR images $\{\mathbf{m}_t\}_{t=1}^T$
and the actual MR images $\{\widehat{\mathbf{m}}_t\}_{t=1}^T$, by jointly planning the power $\mathcal{P}=\{p_{t}\}_{t=1}^T$ ($\mathbf{p}=[p_{1},\cdots,p_{T}]^T$) across all time slots and designing encoder-decoder functions $\mathcal{F}=\{(E_t,D_t)\}_{t=1}^T$. 
This is realized by solving the following cross-layer optimization problem:
	\begin{subequations}
		\label{P1}
		\begin{align}
			\mathsf{P}:\,\,\,
			\min_{\mathcal{P},\mathcal{F}
			}~&\frac{1}{T}\sum_{t=1}^T\mathcal{L}\left(\mathbf{m}_t,\widehat{\mathbf{m}}_t(E_t,D_t)\right) \label{Pa}  \\
			\textrm{s.t.} ~~ & 
\tau B\mathrm{log}_2\left(1+\frac{|h_{t}|^2p_{t}}{\sigma^2}\right)\geq C_t(E_t), \ \forall t,     \label{Pb}   \\
   &\frac{1}{T} \sum_{t=1}^{T}p_{t} \leq P, 
   \ 
   p_{t}\geq 0, \ \forall t,
   \label{Pc}
		\end{align}
	\end{subequations}
where constraint \eqref{Pb} is the transmission requirement, and \eqref{Pc} is the power constraint, with $P$ being the power budget.

Problem $\mathsf{P}$ is defined as the \emph{implicit} CLO problem, since it involves the implicit functions $\{E_t,D_t\}$. Existing methods design $\{E_t,D_t\}$ exploiting compression (e.g., JPEG) or caching \cite{guo2020adaptive}. 
However, they still lead to high data rate requirements, e.g., each $C_t$ is in the range of hundreds of KBytes or more.

\section{Gaussian Splatting RoboMR}\label{section4}

This paper proposes GSMR, which adopts a GS model as a ``memory system'' for RoboMR.
This enables the server to opportunistically generate a photo-realistic image from a pose \emph{key} as shown in Fig.~\ref{fig:GSMR}. 
Since this key is only tens of Bytes, the data rate requirements $\{C_t\}$ could be reduced by orders of magnitude compared to existing approaches.

\begin{figure}[t]
    \centering
    \includegraphics[width=0.48\textwidth]{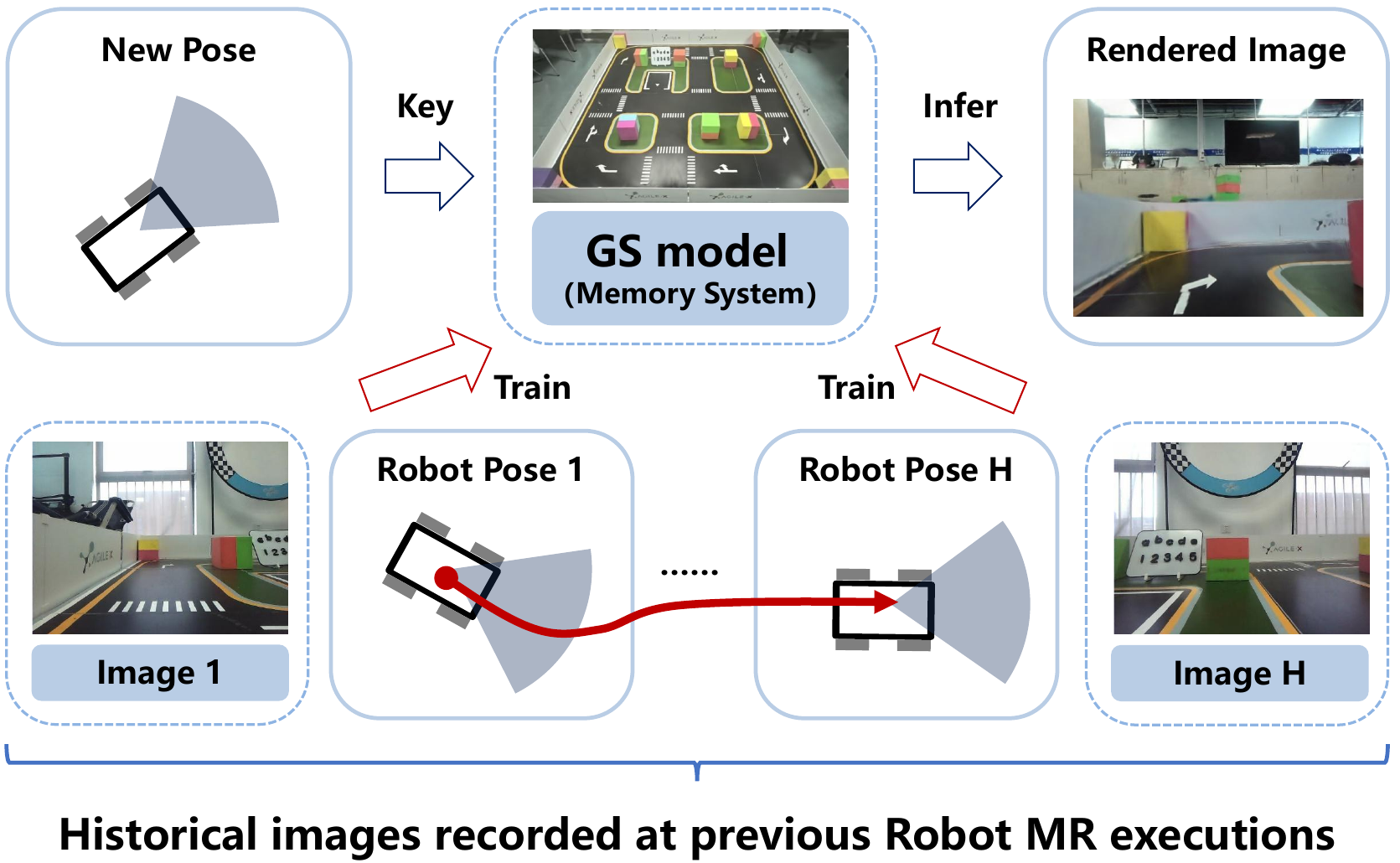}
    \caption{Leverage GS model to render images.}
    \label{fig:GSMR}
\end{figure}

\subsection{Gaussian Splatting and Content Switching}

In particular, the GS model, $\Phi$, is operated on a given robot pose $\mathbf{s}_t$, and outputs a synthesis image $\mathbf{y}_t=\Phi(\mathbf{s}_t)$. 
The model $\Phi$ is obtained as follows.
\begin{itemize}
\item First, we represent the scene as numerous 3D Gaussians 
$\mathcal{S}=\{\mathcal{G}_1,\mathcal{G}_2,\cdots\}$, where each $\mathcal{G}_i$ is characterized by a set of trainable parameters.
\item Next, with Gaussians $\mathcal{S}$, RGB images can then be efficiently rendered by alpha-compositing the splatted 2D projection of each Gaussian in order in pixel space. 
The GS-rendered image is represented by a vector $\mathbf{y}_t$. 
\item Finally, given a set of GS generated pose-image pairs $\{(\mathbf{s}_t,\mathbf{y}_t)\}_{t=1}^{H}$ and ground truth pose-image pairs $\{(\mathbf{s}_t, \mathbf{y}^*_t)\}_{t=1}^{H}$, where $H$ is the number of training data, we train GS parameters $\mathcal{S}$ through back propagation of the gradient of loss function $\mathcal{L}(\mathbf{y}_t,\mathbf{y}^*_t)$ with respect to $\mathcal{S}$.
\end{itemize}

In general, the rendered image $\mathbf{y}_t=\Phi(\mathbf{s}_t)$ may mismatch the ground truth image $\mathbf{y}^*_t$.
To address such mismatch, 
we further propose a content switching scheme shown in Fig.~\ref{fig:GS}.
Consequently, the proposed method designs the encoder function $E_t$ as 
	\begin{align}\label{E}
		\mathbf{c}_t=E_t(\mathbf{r}_t) = x_t\mathbf{r}_t + (1-x_t)\mathbf{s}_t, 
	\end{align}
where $x_t\in\{0,1\}$ denotes the content switching variable to be optimized, $x_t=1$ represents transmitting image, and $x_t=0$ represents transmitting pose. 
Accordingly, the decoder function is
\begin{align}\label{D}
		\widehat{\mathbf{r}}_t=D_t(\mathbf{c}_t) = x_t\mathbf{r}_t + (1-x_t)\Phi(\mathbf{s}_t).
\end{align}
The loss function 
$\mathcal{L}\left(\mathbf{m}_t,\widehat{\mathbf{m}}_t\right) $ in $\mathsf{P}$ is thus 
\begin{align}
&\mathcal{L}\left(\mathbf{m}_t,\widehat{\mathbf{m}}_t\right)
=
L_t
(1-x_t)
:=\Theta_t(x_t),
\end{align}
where we have defined 
\begin{align}
L_t:=\mathcal{L}\Big(
&
\mathbf{d}_t\odot\mathbf{r}_t+
(\mathbf{1}-\mathbf{d}_t)
\odot\mathbf{v}_t, 
\nonumber\\
&
\mathbf{d}_t\odot \Phi(\mathbf{s}_t) +
(\mathbf{1}-\mathbf{d}_t)
\odot\mathbf{v}_t
\Big).
\end{align}
On the other hand, since the data volume is $C_t(E_t)=x_t I + (1-x_t)S$, where $I$ and $S$ (with $S\ll I$) are the data volumes in bits of image and pose, respectively, constraint \eqref{Pb} becomes
\begin{align}\label{Pb2}
\tau B\mathrm{log}_2\left(1+\frac{|h_{t}|^2p_{t}}{\sigma^2}\right)\geq x_t I + (1-x_t)S,\ \forall t.
	\end{align}

\subsection{GSCLO Problem Formulation} 

Based on the scheme in Section IV-A, problem $\mathsf{P}$ is explicitly formulated as 
	\begin{subequations}
		\label{P2}
		\begin{align}
			&\mathsf{P}_{\mathrm{GS}}:\,\,\,
			\min_{\mathcal{P},\mathcal{X}
			}~\frac{1}{T}\sum_{t=1}^T\Theta_t(x_t) \label{P2a}  \\
			\textrm{s.t.} ~~ & \tau B\mathrm{log}_2\left(1+\frac{|h_{t}|^2p_{t}}{\sigma^2}\right)\geq x_t I + (1-x_t)S,\ \forall t, \label{Pc_GS}
 \\  
     &\frac{1}{T} \sum_{t=1}^{T}p_{t} \leq P, 
   \ 
   p_{t}\geq 0, \ \forall t,
   \label{Pd_GS}
   \\
   & x_t\in\{0,1\}, \ \forall t, \label{Px_GS}
		\end{align}
	\end{subequations}
where $\mathcal{X}=\{x_{t}\}$ ($\mathbf{x}=[x_1,\cdots,x_T]^T$).

Problem $\mathsf{P}_{\mathrm{GS}}$ is defined as the GSCLO problem, since it leverages the GS model to design functions $\{E_t,D_t\}$ in the CLO problem $\mathsf{P}$.
Note that $\mathsf{P}_{\mathrm{GS}}$ is a mixed integer nonlinear programming (MINLP) problem, which can be solved by branch-and-bound (B\&B) via Mosek \cite{diamond2016cvxpy}. 
However, the complexity is exponential in $T$, which is time-consuming when the number of frames $T$ is large. 
On the other hand, $\mathsf{P}_{\mathrm{GS}}$ can be addressed by continuous relaxation and rounding \cite{hubner2014rounding}, but this would lead to non-negligible performance loss.
Lastly, existing power allocation algorithms, e.g., water-filling \cite{yu2004iterative}, max-min fairness \cite{zheng2016wireless}, fail in directly minimizing the GSMR loss in $\mathsf{P}_{\mathrm{GS}}$.

\begin{figure}[t]
    \centering
    \includegraphics[width=0.49\textwidth]{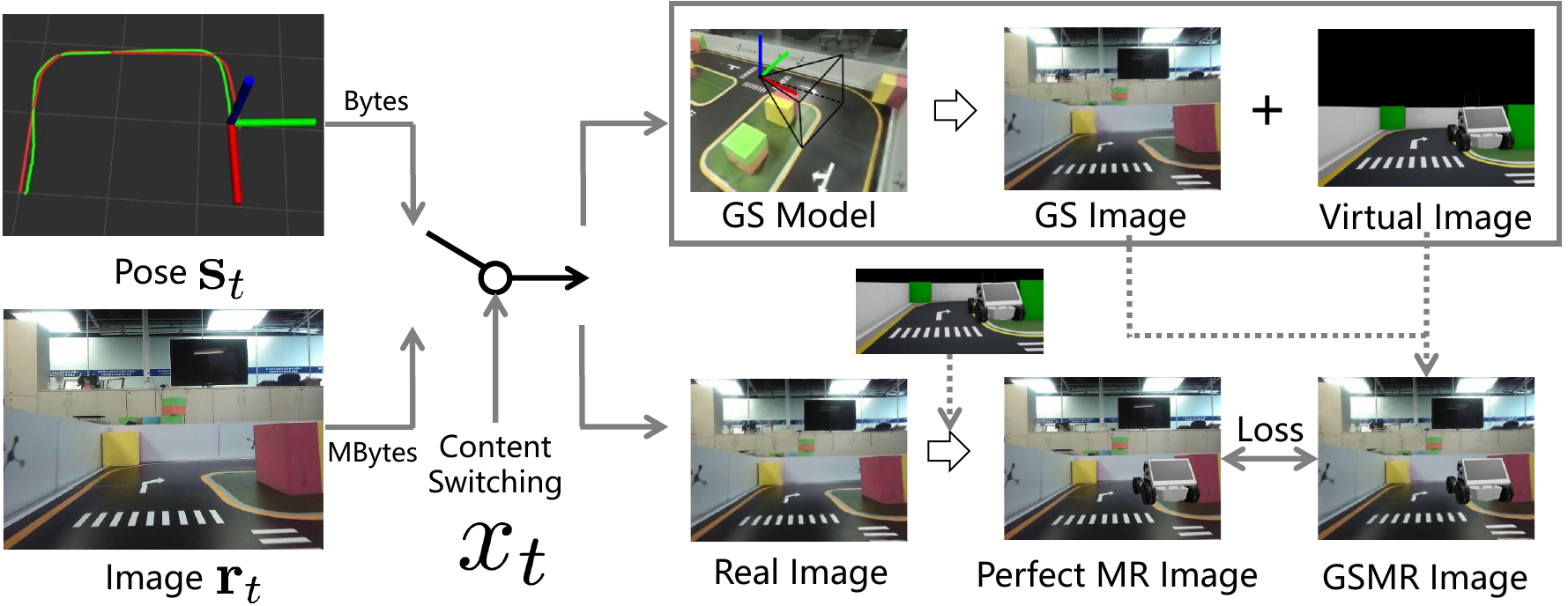}
    \caption{Architecture of content switching GSMR.}
    \label{fig:GS}
\end{figure}
 
\subsection{Penalty Continuous Reformulation}
To tackle the discontinuity, we first relax the binary constraint $x_{t}\in\{0,1\}$ into a linear constraint $0\leq x_{t}\leq 1$, $\forall t$. 
However, the relaxation is not tight and the solution to the relaxed problem could be $0<x_{t}<1$. 
To promote a binary solution for the relaxed variable $\{x_{t}\}$, we augment the objective function with a penalty term as in \cite{rinaldi2009new}. Accordingly, the approximate reformulation with regularized penalty term of $\mathsf{P_{GS}}$ is given by
	\begin{subequations}
		\label{P3}
		\begin{align}
			\mathsf{P}_{\mathrm{Penalty}}:\,\,\,
			\min_{\mathcal{P},\mathcal{X}
			}~&\frac{1}{T}\sum_{t=1}^T\Theta_t(x_t) + \varphi(\mathbf x) \label{PPenaltya}  \\
			\textrm{s.t.} ~~ & 
            \textsf{constraints }(\ref{Pc_GS}), (\ref{Pd_GS}), 
               \\
   & 0\leq x_{t}\leq 1, \ \forall t, \label{0<x<1}
		\end{align}
	\end{subequations}
where $\varphi(\mathbf x)$ is a penalty function to penalize the violation of the zero-one integer constraints. A celebrated penalty function was introduced in \cite{lucidi2010exact}, where the penalty function is set as
\begin{align} \label{eq:penaltyterm}
    \varphi(\mathbf x)=\frac{1}{\beta}\sum_{t=1}^T x_{t}(1-x_t),
\end{align}
where $\beta>0$ is the penalty parameter.
According to \cite[Proposition 1]{lucidi2010exact}, with the penalty term in (\ref{eq:penaltyterm}), there exists an upper bound $\bar\beta>0$ such that for any $\beta\in[0,\bar\beta]$, $\mathsf{P}_{\mathrm{Penalty}}$ and $\mathsf{P_{GS}}$ have the same minimum points, i.e., $\mathsf{P}_{\mathrm{Penalty}}$ and $\mathsf{P_{GS}}$ are equivalent with a proper choice of $\beta$ \cite{lucidi2010exact}.

\subsection{Difference of Convex Algorithm}

It can be seen from problem $\mathsf{P}_{\mathrm{Penalty}}$ that the constraints in \eqref{Pc_GS} are convex, since the function $-\mathrm{log}_2(\cdot)$ is convex.
Besides, the constraints in \eqref{Pd_GS} are all affine. 
The only nonconvex term in the objective function is the penalty $\varphi(\mathbf x)$. 
However, by expanding $\varphi(\mathbf x)$ as 
$\varphi(\mathbf x) =
\frac{1}{\beta}\sum_{t=1}^T x_{t}- \frac{1}{\beta}\sum_{t=1}^T x_{t}^2
$, we find that $\varphi(\mathbf x)$ is a difference of convex (DC) function \cite{an2005dc}, and such DC functions enjoy excellent surrogate properties. 
Specifically, given a certain solution $\{x_{t}^{\star}\}$,
we apply the first-order Taylor expansion on $\varphi(\mathbf x)$ and obtain 
$\varphi(\mathbf x)\approx \widehat{\varphi}(\mathbf x|\mathbf x^\star )$, where 
\begin{align}
&\widehat{\varphi}(\mathbf x|\mathbf x^\star ) = 
\sum_{t=1}^T 
\left(
\frac{1}{\beta}x_{t}-\frac{2}{\beta}x_{t}^{\star}x_{t}+\frac{1}{\beta}x_{t}^{\star^2}
\right).
\end{align}
Using the properties of DC functions, the following proposition can be established.

\begin{proposition}
$\widehat{\varphi}(\mathbf x|\mathbf x^\star )$ satisfy the following conditions:

\noindent(i) Convexity: $\widehat{\varphi}(\mathbf x|\mathbf x^\star )$ is convex in $\mathbf{x}$.

\noindent(ii) Upper bound: $\widehat{\varphi}(\mathbf x|\mathbf x^\star )
\geq \varphi(\mathbf x)$ for any $\mathbf x$.

\noindent(iii) 
Local equivalence:
$\widehat{\varphi}(\mathbf x^\star|\mathbf x^\star )
=\varphi(\mathbf x^\star)$ 
and 
$\nabla\widehat{\varphi}(\mathbf x^\star|\mathbf x^\star )
=\nabla\varphi(\mathbf x^\star)$.

\end{proposition}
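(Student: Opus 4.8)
The plan is to verify each of the three properties directly from the explicit formula for $\widehat{\varphi}(\mathbf x|\mathbf x^\star)$, treating $\mathbf x^\star$ as fixed and $\mathbf x$ as the variable. Since $\widehat{\varphi}$ is separable across the index $t$, it suffices to analyze the scalar function $g_t(x) = \frac{1}{\beta}x - \frac{2}{\beta}x_t^\star x + \frac{1}{\beta}x_t^{\star 2}$ and the corresponding scalar term $\phi_t(x) = \frac{1}{\beta}x(1-x)$ of $\varphi$, then sum over $t$.

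First, for convexity (i): each $g_t$ is affine in $x$, hence convex; a sum of convex functions is convex, so $\widehat{\varphi}(\cdot\,|\mathbf x^\star)$ is convex in $\mathbf x$. (In fact it is the first-order Taylor expansion at $\mathbf x^\star$ of the \emph{concave} part $-\frac{1}{\beta}\sum_t x_t^2$ plus the convex linear part $\frac{1}{\beta}\sum_t x_t$, so linearizing only the concave piece yields a globally affine surrogate.) Second, for the upper bound (ii): the difference $\widehat{\varphi}(\mathbf x|\mathbf x^\star) - \varphi(\mathbf x)$ is separable with $t$-th term $g_t(x) - \phi_t(x) = \frac{1}{\beta}\bigl(x - 2x_t^\star x + x_t^{\star 2}\bigr) - \frac{1}{\beta}\bigl(x - x^2\bigr) = \frac{1}{\beta}\bigl(x^2 - 2x_t^\star x + x_t^{\star 2}\bigr) = \frac{1}{\beta}(x - x_t^\star)^2 \ge 0$ since $\beta>0$. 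Summing over $t$ gives $\widehat{\varphi}(\mathbf x|\mathbf x^\star) \ge \varphi(\mathbf x)$ for all $\mathbf x$. Third, for local equivalence (iii): setting $\mathbf x = \mathbf x^\star$ in the displayed identity makes the squared term vanish, so $\widehat{\varphi}(\mathbf x^\star|\mathbf x^\star) = \varphi(\mathbf x^\star)$; and differentiating the relation $\widehat{\varphi}(\mathbf x|\mathbf x^\star) = \varphi(\mathbf x) + \frac{1}{\beta}\sum_t (x_t - x_t^\star)^2$ in $\mathbf x$ gives $\nabla\widehat{\varphi}(\mathbf x|\mathbf x^\star) = \nabla\varphi(\mathbf x) + \frac{2}{\beta}(\mathbf x - \mathbf x^\star)$, which at $\mathbf x = \mathbf x^\star$ reduces to $\nabla\widehat{\varphi}(\mathbf x^\star|\mathbf x^\star) = \nabla\varphi(\mathbf x^\star)$.

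Honestly, there is no real obstacle here: the result is an instance of the standard fact that linearizing the concave component of a DC function produces a convex majorant that is tight to first order at the expansion point. The only thing to be careful about is bookkeeping the sign and the factor $1/\beta$ correctly when identifying the gap as $\frac{1}{\beta}\|\mathbf x - \mathbf x^\star\|^2$, which is the crux of both (ii) and (iii). I would present the one-line computation of $g_t(x) - \phi_t(x) = \frac{1}{\beta}(x-x_t^\star)^2$ as the organizing identity and derive all three properties as immediate corollaries.
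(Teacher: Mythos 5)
Your proof is correct and follows essentially the same route as the paper: convexity from the surrogate being affine (the paper phrases this as $\nabla^2_{\mathbf x}\widehat{\varphi}=\mathbf{0}$), the upper bound from the identity $\widehat{\varphi}(\mathbf x|\mathbf x^\star)-\varphi(\mathbf x)=\frac{1}{\beta}\sum_t(x_t-x_t^\star)^2\geq 0$, and local equivalence by direct evaluation and differentiation. Your packaging of all three parts as corollaries of that single gap identity is a slightly tidier presentation of the same computation, not a different argument.
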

\begin{proof}
Part (i) is proved by checking the semi-definiteness of the Hessian of $\widehat{\varphi}$.
In particular, $\nabla^2_{\mathbf x}\widehat{\varphi}=\mathbf{0}$, which is semi-definite.
Part (ii) is proved by checking 
\begin{align}
\widehat{\varphi}(\mathbf x|\mathbf x^\star )-\varphi(\mathbf x)
&=
\frac{1}{\beta}\sum_{t=1}^T x_{t}^2
-
\sum_{t=1}^T\frac{2}{\beta}x_{t}^{\star}x_{t}
+\sum_{t=1}^T
\frac{1}{\beta}x_{t}^{\star^2}
\nonumber
\\
&=
\frac{1}{\beta}\sum_{t=1}^T
\left(x_{t}-x_{t}^{\star}\right)^2\geq 0.
\end{align}
Part (iii) is proved by computing and comparing the function and gradient values of  $\widehat{\varphi}$ and $\varphi$.
\end{proof}

Based on part (ii) of \textbf{Proposition 1}, an upper bound problem, denoted as $\mathsf{P}_{\mathrm{Penalty}}'$, can be directly
obtained if we replace the function $\varphi(\mathbf x)$ by $\widehat{\varphi}(\mathbf x|\mathbf x^\star )$ expanded around a feasible point $\mathbf x^\star$ for the problem $\mathsf{P}_{\mathrm{Penalty}}$.
Moreover, according to part (i) of \textbf{Proposition 1}, this $\mathsf{P}_{\mathrm{Penalty}}'$ is guaranteed to be a solvable convex problem.
Finally, according to part (iii) of \textbf{Proposition 1}, a tighter upper bound can be achieved if we treat the obtained solution as another feasible point and continue
to construct the next round surrogate function. 
This leads to the DC algorithm \cite{an2005dc}, which solves a sequence of convex optimization problems 
$\{\mathsf{P}_{\mathrm{DC}}^{[0]},\mathsf{P}_{\mathrm{DC}}^{[1]},\mathsf{P}_{\mathrm{DC}}^{[2]},\cdots\}$, where $\mathsf{P}_{\mathrm{DC}}^{[n+1]}$ is the optimization problem in the $(n+1)$-th iteration of the DC algorithm, and is given by 
		\begin{align}\label{Pt+1}
			\mathsf{P}_{\mathrm{DC}}^{[n+1]}:\,\,\,
			\min_{\mathcal{P},\mathcal{X}
			}~&\frac{1}{T}\sum_{t=1}^T\Theta_t(x_t)
   +
   \widehat{\varphi}(\mathbf x|\mathbf x^{[n]} ) \nonumber
   \\
			\textrm{s.t.} ~~ & \textsf{constraints }(\ref{Pc_GS}), (\ref{Pd_GS}), (\ref{0<x<1})
		\end{align}
Here, $\mathcal{X}^{[n]}=\{x_{t}^{[n]}\}$ is the optimal solution of $\mathcal{X}$ to $\mathsf{P}_{\mathrm{DC}}^{[n]}$.
Each $\mathsf{P}_{\mathrm{DC}}^{[n+1]}$ is a convex problem and can be solved via off-the-shelf toolbox (e.g., CVXPY) with a complexity of $\mathcal{O}((2T)^{3.5})$.
According to \textbf{Proposition 1} and \cite{abbaszadehpeivasti2024rate}, any limit point of the sequence $\{(\mathcal{P}^{[n]}, \mathcal{X}^{[n]}\}_{n=0,1,\cdots}$ is a Karush-Kuhn-Tucker (KKT) solution to the problem $\mathsf{P}_{\mathrm{Penalty}}$ for any feasible starting point 
$(\mathcal{P}^{[0]}, \mathcal{X}^{[0]})$.

\subsection{Initialization via Ranking}

The APO algorithm requires a feasible initialization. 
Initialization using $\{x_{t}=\delta,\forall t\}$ with $\delta>0$ (e.g., $\delta=0.5$) may lead to violations of constraints \eqref{Pc_GS} and \eqref{Pd_GS}.
A conservative initialization is $\{x_{t}=0,\forall t\}$. 
However, this easily leads to slow convergence of the DC algorithm. 
Here, we propose a ranking-based algorithm to accelerate the convergence while guaranteeing feasibility.

Specifically, the ranking algorithm sorts the list of GS losses $\mathcal{R}=\{L_t\}_{t=1}^T$ in descending order, and the reordered list is denoted as $\mathcal{R}'=\{L_{g(1)},L_{g(2)},\cdots\}$, where $g(j)$ is an index mapping $j\rightarrow g(j)$ such that 
$L_{g(j)}\geq 
L_{g(j+1)}$ for all $j$ and $g(1)$ is the index corresponding to the largest value among all $\{L_t\}$. 
The ranking solution is given by:
\begin{equation}\label{ranking}
(x_t^{\mathrm{Rank}},p_t^{\mathrm{Rank}})=
\left\{\begin{array}{ll}
(1,\frac{\sigma^2}{|h_{t}|^2}
2^{
\frac{I}{\tau B}
}-\frac{\sigma^2}{|h_{t}|^2}), \ &\mathrm{if}~g(t)\leq \mu \\
(0,\frac{\sigma^2}{|h_{t}|^2}
2^{
\frac{S}{\tau B}
}-\frac{\sigma^2}{|h_{t}|^2}), \ &\mathrm{if}~g(t)> \mu
\end{array}\right.,
\end{equation} 
where 
$\mu=
\mathrm{arg} \mathop{\mathrm{max}}_{x\in\mathbb{Z}}
\{x:
\Xi(x)
\leq
TP\}
$ and 
\begin{align}
\Xi(x)=
\sum_{t=1}^x\frac{\sigma^22^{
\frac{I}{\tau B}
}-\sigma^2}{|h_{g(t)}|^2}
+
\sum_{t=x+1}^T
\frac{\sigma^22^{
\frac{S}{\tau B}
}-\sigma^2}{|h_{g(t)}|^2}.
\end{align}
The insight of solution \eqref{ranking} is that $L_t$ represents how much loss can be reduced by switching from pose to image, and $x_t=1$ should be assigned to the frame that may reduce the loss to the maximum extent. 
Moreover, given the values of $x_t=1$ or $x_t=0$, the optimal transmit power at time $t$ must activate constraint \eqref{Pd_GS}, so as to reserve more power for the remaining frames.
The following proposition can be established to confirm the above insight.

\begin{proposition}
The ranking-based solution $\{x_t^{\mathrm{Rank}},p_t^{\mathrm{Rank}}\}$ is optimal to $\mathsf{P_{GS}}$ if $|h_1|^2=\cdots=|h_T|^2$.
\end{proposition}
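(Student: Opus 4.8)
The plan is to separate the proof into an inner power step and an outer content step. First I would fix the binary vector $\mathbf{x}\in\{0,1\}^T$ and observe that the objective $\frac1T\sum_t\Theta_t(x_t)$ does not depend on $\mathcal{P}$, while decreasing any $p_t$ only relaxes the budget \eqref{Pd_GS} and the rate requirement \eqref{Pc_GS} forces $p_t\ge \frac{\sigma^2}{|h_t|^2}\big(2^{(x_tI+(1-x_t)S)/(\tau B)}-1\big)$. Hence at optimality \eqref{Pc_GS} is active for every $t$, which is exactly the $p_t^{\mathrm{Rank}}$-type expression in \eqref{ranking}: $\frac{\sigma^2}{|h_t|^2}(2^{I/(\tau B)}-1)$ when $x_t=1$ and $\frac{\sigma^2}{|h_t|^2}(2^{S/(\tau B)}-1)$ when $x_t=0$. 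Thus $\mathsf{P}_{\mathrm{GS}}$ reduces to choosing $\mathbf{x}$ subject to $\sum_{t:\,x_t=1}\frac{\sigma^2(2^{I/(\tau B)}-1)}{|h_t|^2}+\sum_{t:\,x_t=0}\frac{\sigma^2(2^{S/(\tau B)}-1)}{|h_t|^2}\le TP$ (any $\mathbf{x}$ whose minimal powers violate this is simply infeasible).

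Second, I would invoke the hypothesis $|h_1|^2=\cdots=|h_T|^2=:|h|^2$ to collapse this coupled constraint into a cardinality constraint. Writing $k=|\{t:x_t=1\}|$, the left-hand side becomes $\frac{\sigma^2}{|h|^2}\big(k(2^{I/(\tau B)}-1)+(T-k)(2^{S/(\tau B)}-1)\big)$, which equals $\Xi(k)$ under equal channels (the index mapping $g$ is irrelevant here since all $|h_{g(t)}|^2$ coincide) and, because $I>S$, is strictly increasing in $k$. Therefore feasibility of $\mathbf{x}$ for $\mathsf{P}_{\mathrm{GS}}$ is equivalent to $\Xi(k)\le TP$, i.e.\ to $k\le\mu$ with $\mu$ as defined after \eqref{ranking}.

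Third, I would rewrite the objective as $\frac1T\sum_t L_t(1-x_t)=\frac1T\big(\sum_{t=1}^T L_t-\sum_{t:\,x_t=1}L_t\big)$, so minimizing it subject to $k\le\mu$ is equivalent to maximizing $\sum_{t:\,x_t=1}L_t$ over all index subsets of size at most $\mu$. Since every $L_t\ge 0$, an optimal subset has size exactly $\min(\mu,T)$ and can be taken to be the indices of the $\mu$ largest values of $\{L_t\}_{t=1}^T$, namely $\{g(1),\dots,g(\mu)\}$ in the notation of the ranking algorithm. Assigning $x_t=1$ on this set and the active-rate power everywhere reproduces $\{x_t^{\mathrm{Rank}},p_t^{\mathrm{Rank}}\}$, proving its optimality; if $\Xi(0)>TP$ instead, $\mathsf{P}_{\mathrm{GS}}$ is infeasible and there is nothing to prove.

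The one genuinely delicate point is the second step: it is precisely the equal-channel assumption that makes the power increment of flipping $x_t$ from $0$ to $1$, namely $\frac{\sigma^2}{|h_t|^2}(2^{I/(\tau B)}-2^{S/(\tau B)})$, the same across $t$, so that the plain ``sort by $L_t$'' rule is optimal; with unequal channels the selection becomes a knapsack-type trade-off between $L_t$ and this per-frame increment, and the ranking rule need not be optimal. I would also spell out the ``make \eqref{Pc_GS} active'' claim as a one-line monotonicity/exchange argument and note explicitly that any unused power budget in $\{p_t^{\mathrm{Rank}}\}$ is harmless because the objective is independent of $\mathcal{P}$.
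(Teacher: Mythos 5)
Your proposal is correct and follows essentially the same route as the paper's proof: make the rate constraint \eqref{Pc_GS} tight to obtain the closed-form powers, use the equal-gain assumption to collapse the power budget into the cardinality condition $\Xi(k)\le TP$, and then select frames by an exchange argument on the losses. In fact your write-up is more complete than the paper's sketch, which only argues by contradiction that the optimal number of uploads equals $\mu$ and leaves implicit both the step that the chosen set must be the $\mu$ largest values of $\{L_t\}$ and the observation that unused power is harmless because the objective does not depend on $\mathcal{P}$.
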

\begin{proof}
First, it can be proved that constraint \eqref{Pd_GS} always activates at the optimal $\{x_t^*,p_t^*\}$.
This gives $p_t^* = \frac{\sigma^2}{|h_{t}|^2}
2^{
\frac{I}{\tau B}
}-\frac{\sigma^2}{|h_{t}|^2}$ if $x_t^*=1$ and 
$p_t^* = \frac{\sigma^2}{|h_{t}|^2}
2^{
\frac{S}{\tau B}
}-\frac{\sigma^2}{|h_{t}|^2}$ if $x_t^*=0$.
Putting the above equations and $|h_1|^2=\cdots=|h_T|^2$ into $\mathsf{P_{GS}}$, we can prove $\sum x_t^*=\mu$ by contradiction. Assuming $\sum x_t^*<\mu$, then we can always decrease the objective value $\sum_tL_t(1-x_t)$ by setting $\sum x_t^*\leftarrow\sum x_t^*+1$ and $x_t^* \leftarrow 1-x_t^*$ for any $x_t^*=0$.
This contradicts the optimality of $\{x_t^*,p_t^*\}$.
\end{proof}

To efficiently compute the threshold $\mu$, it is observed that $\Xi(x)$ is an increasing function of $x$. Therefore, $\mu$ can be obtained from checking $\Xi(x)
\leq
TP$ using bisection method within
interval $[1,T]$. The bisection method exhibits an iteration complexity of
$\mathcal{O}(\mathrm{log}(T))$.

\textbf{Proposition 2} shows that the performance of ranking-based algorithm depends on the variation of $\{h_t\}$. It is in general suboptimal to $\mathsf{P}_{\mathrm{GS}}$, since it assumes equal channel gains.
Based on \textbf{Proposition 2}, the proposed initialization for APO is $\{x_{t}^{[0]}=x_{t}^{\mathrm{Rank}},p_t^{[0]}=p_t^{\mathrm{Rank}}\}$, and the entire APO algorithm is summarized in Algorithm 1.

\begin{algorithm}[!t]
    \caption{GSCLO via APO.}
        \begin{algorithmic}[1]
            \State \textbf{Input} Parameters $(\{L_t\},\tau,B,\{h_t\},\sigma^2,I,S,P)$.
            \State \textbf{Initialize} $\{x_t^{[0]},p_t^{[0]}\}$ according to \eqref{ranking} and set $n=0$.
            \State \textbf{Repeat}
            \State \ \ \   Solve $\mathsf{P}_{\mathrm{DC}}^{[n+1]}$ and obtain its solution $\{x_t^{*},p_t^{*}\}$.
            \State \ \ \   $\{x_t^{[n+1]},p_t^{[n+1]}\}\leftarrow \{x_t^*,p_t^*\}$ and $n \leftarrow  n+1$.
            \State \textbf{Until} convergence and set $\{\mathbf{x}^\diamond=\mathbf{x}^{[n]},\mathbf{p}^\diamond=\mathbf{p}^{[n]}\}$.
            \State \textbf{Output} $(\mathbf{x}^\diamond,\mathbf{p}^\diamond)$.
        \end{algorithmic}
\end{algorithm}

\section{Robust GSCLO Under Channel Uncertainty}\label{section5}

To realize GSCLO in practical RoboMR systems, we need to estimate the channel gains $|h_{t}|^2$. 
This implies that the parameters $|h_{t}|^2$ in $\mathsf{P}_{\mathrm{GS}}$ may involve estimation errors. 
However, the APO algorithm in Section IV is based on the assumption of perfect channel estimation, which may lead to performance degradation. 
Hence, this section investigates the robust GSCLO problem in RoboMR networks by considering the uncertainty of channel gain. 

Specifically, under channel uncertainty, the actual channel is given by
\begin{equation}
h_{t} = \Tilde{h}_{t} + \Delta h_t, \label{uncertainty}
\end{equation}
where $\Tilde{h}_{t}$ is the estimated channel known at the server, and $\Delta h_t$ is the estimation error, which is independent of $h_{t}$. 
According to \cite{Dig_Fading05}, $\Delta h_t$ can be modeled as Gaussian distribution $\Delta h_t\sim \mathcal{CN}(0,\omega^2)$ with zero mean and variance $\omega^2$.
By measuring the robot-server channel samples, the server can acquire the value of $\omega^2$.
Therefore, given channel estimate $\Tilde{h}_{t} $, $h_t$ can be regarded as a complex Gaussian variable with mean $\Tilde{h}_{t}$ and variance $\omega^2$. Therefore, $|h_t|^2$ given $\Tilde{h}_{t}$ follows a non-central $\chi^2$ distribution with two degrees of freedom, and the conditional PDF is 
\begin{align}\label{eq:pdf-fCSI}
    f(|h_t|^2|\Tilde{h}_{t})=&\frac{1}{\omega^2}\exp\left(-\frac{|\Tilde{h}_{t}|^2+|h_t|^2}{\omega^2}\right)I_0\left(\frac{\sqrt{|h_t|^2|\Tilde{h}_{t}|^2}}{\omega^2/2}\right),
    \nonumber\\
    &\forall |h_t|^2\geq 0,   
    \end{align}
where $I_0(x)$ is the zero-th order modified Bessel function of the first kind.

The channel gain distribution \eqref{eq:pdf-fCSI} implies that the achievable rate $R_t(p_t)$, which is conditioned on $|h_t|^2$, is also random~\cite{J_ZZ20SecureProb,TWC_single22}.
Consequently, the data transmission outage event occurs at the server when constraint~\eqref{Pc_GS} is not satisfied. Mathematically, this outage event can be characterized by the outage probability (OP), which is given by
\begin{align}\label{eq:OP_inter}
   P_{t}^\mathrm{out}
&=\mathrm{Pr}\left\{|h_{t}|^2<\left[2^{\frac{x_t I + (1-x_t)S}{\tau B}}-1\right]\frac{\sigma^2}{p_{t}}\Big|\Tilde{h}_{t}\right\}\\
&=\int_0^{\left[2^{\frac{x_t I + (1-x_t)S}{\tau B}}-1\right]\frac{\sigma^2}{p_{t}}}  f(|h_t|^2|\Tilde{h}_{t})d |h_t|^2\\
&=
\underbrace{
1-Q_1\left(\frac{\sqrt{|\Tilde{h}_t|^2}}{\sqrt{\omega^2/2}},\frac{\sqrt{(2^{\frac{x_t I + (1-x_t)S}{\tau B}}-1)\frac{\sigma^2}{p_{t}}}}{\sqrt{\omega^2/2}}\right)}
_{:=\Gamma_t(x_t,p_t)},
\end{align} 
where $Q_1(x,y)$ is the first-order Marcum Q-function~\cite[eqn. 4-33]{Dig_Fading05}.
With the introduction of OP, the nominal transmission constraint~\eqref{Pc_GS} becomes forcing the OP below a certain threshold, i.e., $P_{t}^\mathrm{out}\leq \epsilon$, where $\epsilon\in(0,1)$ is the target OP requirement \cite{TWC_single22}. 
This new OP-based data transmission constraint is written as $\Gamma_t(x_t,p_t)\leq \epsilon$.

Hence, under channel uncertainty, the robust GSCLO problem is 
\begin{subequations}
		\label{P3}
		\begin{align}
			\mathsf{P}_{\mathrm{GS}}':\,\,\,
			\min_{\mathcal{P},\mathcal{X}
			}~&\frac{1}{T}\sum_{t=1}^T\Theta_t(x_t)  \\
			\textrm{s.t.} ~~ 
&\Gamma_t(x_t,p_t)\leq \epsilon, \ \forall t, \label{Pc_GS'}
 \\  
     &\frac{1}{T} \sum_{t=1}^{T}p_{t} \leq P, 
   \ 
   p_{t}\geq 0, \ \forall t.
   \label{Pd_GS'}
   \\
   & x_t\in\{0,1\}, \ \forall t.  \label{Px_GS'}
		\end{align}
	\end{subequations}
The above problem is nontrivial to solve due to the numerical Marcum Q-function $Q_1$. 
The APO algorithm that directly conducts joint optimization over $\{\mathcal{P},\mathcal{X}\}$ becomes inapplicable.
Thus, we decompose $\mathsf{P}_{\mathrm{GS}}'$ into a two-layer optimization problem, with the outer iteration solving $\mathcal{X}$'s subproblem and the inner iteration solving $\mathcal{P}$'s subproblem.

The $\mathcal{X}$'s subproblem is solved via iterative local search \cite{neumann2007randomized}, which is guaranteed to achieve a local optimal solution. 
First, we start from the APO solution to $\mathsf{P}_{\mathrm{GS}}$, denoted as $\mathbf{x}^{[0]}$.
This $\mathbf{x}^{[0]}$ is over-confident and may lead to violation of constraints \eqref{Pc_GS'} for some $t$, thus failure of image uploads.
To this end, we define the neighborhood of $\mathbf{x}^{[0]}$ as:
\begin{align}
	\mathcal{N}(\mathbf{x}^{[0]})=\{\mathbf{x}:||\mathbf{x}-\mathbf{x}^{[0]}||_0\leq \eta,~\mathbf{x}\in\{0,1\}^T\},
\end{align}
which $1 \leq \eta \leq T$ is the variable size. 
Second, we randomly flip $\eta$ elements of $\mathbf{x}^{[0]}$, to generate a new feasible solution $\mathbf{x}' \in \mathcal{N}(\mathbf{x}^{[0]})$. Third, with the choice of $\mathbf{x}$ fixed to $\mathbf{x}=\mathbf{x}'$, we check the feasibility of $\mathsf{P}_{\mathrm{GS}}'$ by solving the $\mathcal{P}$'s subproblem, and obtain the associated feasibility flag $\mathsf{feasible}$. 
Fourth, we consider the following two branches: 
\begin{itemize}
    \item[(i)] If $\mathsf{feasible}=1$ and 
$\frac{1}{T}\sum_{t=1}^T\Theta_t(x_t') \leq \frac{1}{T}\sum_{t=1}^T\Theta_t(x_t^{[0]}) $, we update $\mathbf{x}^{[1]}\leftarrow\mathbf{x}'$ and construct the neighborhood of $\mathcal{N}(\mathbf{x}^{[1]})$ to execute the next iteration; 
\item[(ii)]  Otherwise, we execute the random flipping process repeatedly to generate another feasible solution within the $\mathcal{N}(\mathbf{x}^{[0]})$ until 
condition (i) is satisfied. 
\end{itemize}
This process is executed iteratively until the maximum number of iterations $\overline{\mathcal{I}}$ is reached, generating a sequence $\{\mathbf{x}^{[1]},\mathbf{x}^{[2]},\cdots\}$.

Now, the remaining problem is how to solve the $\mathcal{P}$'s subproblem given fixed $\mathbf{x}=\mathbf{x}'$.
An efficient way to provide a feasibility check of $\mathsf{P}_{\mathrm{GS}}'$ given $\mathbf{x}=\mathbf{x}'$ is to first minimize the total transmit power via the following problem
\begin{subequations}
		\label{F}
		\begin{align}
			\mathsf{F}_{\mathrm{GS}}:\,\,\,
			\min_{\mathcal{P}
			}~&\frac{1}{T} \sum_{t=1}^{T}p_{t}  \\
			\textrm{s.t.} ~~ 
&\Gamma(x_t',p_t)\leq \epsilon, \ p_{t}\geq 0, \ \forall t,
   \label{Fb_GS}
		\end{align}
	\end{subequations}
and then check whether the optimal $\mathcal{P}^*$ to $\mathsf{F}_{\mathrm{GS}}$ satisfies $\frac{1}{T} \sum_{t=1}^{T}p_{t}^*\leq P$.
If so, problem $\mathsf{P}_{\mathrm{GS}}'$ given $\mathbf{x}=\mathbf{x}'$ is feasible;
otherwise, the transmit power budget cannot support MR message exchanges and $\mathsf{P}_{\mathrm{GS}}'$ given $\mathbf{x}=\mathbf{x}'$ is infeasible.

\begin{algorithm}[!t]
    \caption{Robust GSCLO via BILS.}
        \begin{algorithmic}[1]
            \State \textbf{Initialize} $\mathbf{x}^{[0]}=\mathbf{x}^{\mathrm{APO}}$ and $\eta=5$. 
            \State Set $i=0$ and iteration counter $n=0$.
            \State \textbf{Repeat}
            \State \ \ \ Sample a solution $\mathbf{x}'\in\mathcal{N}(\mathbf{x}^{[i]})$.
            \State \ \ \ Solve $\mathsf{F}_{\mathrm{GS}}$ using bisection and obtain solution $\{p_{t}^*\}$.
            \State \ \ \ If $\frac{1}{T} \sum_{t}p_{t}^*\leq P$ \& 
$\frac{1}{T}\sum_{t}\Theta_t(x_t') \leq \frac{1}{T}\sum_{t}\Theta_t(x_t^{[i]})$:
            \State \ \ \ \ \ \ $\mathbf{x}^{[i+1]}\leftarrow\mathbf{x}'$, $\mathbf{p}^{[i+1]}\leftarrow\mathbf{p}^*$, $i\leftarrow i+1$.
            \State \ \ \ $n\leftarrow n+1$.
            \State \textbf{Until} $n=\overline{\mathcal{I}}$ and set $\{\mathbf{x}^\diamond=\mathbf{x}^{[i]},\mathbf{p}^\diamond=\mathbf{p}^{[i]}\}$
            \State Output $\{\mathbf{x}^\diamond,\mathbf{p}^\diamond\}$.
        \end{algorithmic}
\end{algorithm}

The benefit of transforming $\mathsf{P}_{\mathrm{GS}}'$ into $\mathsf{F}_{\mathrm{GS}}$ is that power variables $\{p_t\}$ at different frames are decoupled in $\mathsf{F}_{\mathrm{GS}}$, which can be optimized in parallel. 
Specifically, minimizing $\frac{1}{T} \sum_{t=1}^{T}p_{t}$ is equivalent to minimizing $p_t$ for all $t$.
For each $p_t$, it must be bounded by $0 \leq p_t \leq P$. 
Moreover, the marcum Q function $Q_1$ is a monotonically increasing function of $p_t$. 
Consequently, we can conduct a bisection search of $p_t$ within interval $[0,P]$, so as to find the minimum $p_t^*$ satisfying constraint \eqref{Fb_GS}. 
The complexity of bisection for solving $\mathsf{F}_{\mathrm{GS}}$ is $\mathcal{O}(T\mathrm{log}(P))$. 
The entire bisection-in-the-loop iterated local search (BILS) algorithm is summarized in Algorithm 2.

To execute the proposed Algorithm 1 (i.e., APO GSCLO) and Algorithm 2 (i.e., Robust GSCLO), we only need to know the large-scale fading (i.e., channel gain) as seen from problems $\mathsf{P}_{\mathrm{GS}}$ and $\mathsf{P}_{\mathrm{GS}}'$. Given the server position, channel gains can be pre-determined by a radio map $\mathsf{RM}$, which is a function that maps
the robot location $(a_t,b_t)$ to the corresponding signal attenuation as $|h_{t}|^2=
\mathsf{RM}(a_t,b_t)$.
This radio map can be obtained by learning from radio measurements in various positions \cite{levie2021radiounet}. 
The variance of radio measurements can be adopted to compute channel uncertainties $\omega^2$ \cite{malmirchegini2012spatial}.
Note that to reduce the cost of radio measurements, it is possible to combine ray tracing \cite{remley2000improving}, e.g., Nvidia sionna (https://developer.nvidia.com/sionna) with the measurement dataset for faster radio mapping.

\section{More Extensions of GSCLO}\label{section6}

This section presents more variants of GSCLO to achieve low-power GSMR and multi-robot GSMR.

\subsection{Low-Power GSCLO}

The nominal problem $\mathsf{P}_{\mathrm{GS}}$ minimizes the GSMR loss under a given power budget, which is often adopted in resource-constrained systems. 
In QoE constrained systems, it is necessary to minimize the total power consumption under RoboMR QoE constraints, which leads to the following related problem of $\mathsf{P}_{\mathrm{GS}}$:
	\begin{subequations}
		\label{P2}
		\begin{align}
			\mathsf{Q}_{\mathrm{GS}}:\,\,\,
			\min_{\mathcal{P},\mathcal{X}
			}~&\frac{1}{T} \sum_{t=1}^{T}p_{t} \label{Pa'}  \\
			\textrm{s.t.} ~~ &
   \frac{1}{T}\sum_{t=1}^T\Theta_t(x_t) \leq L_{\mathrm{th}},
   \\
   &\textsf{constraints }(\ref{Pc_GS}), (\ref{Px_GS}), \ 
   p_{t}\geq 0, \ \forall t,
		\end{align}
	\end{subequations}
where $L_{\mathrm{th}}$ is a threshold to guarantee desired QoE. 
According to \cite{kerbl20233d} and our own experiments, to ensure satisfactory image qualities, the PSNR should be controlled between $30$\,dB to $40$\,dB, which corresponds to a loss between $0.02$ and $0.04$. 
Therefore, we can set $L_{\mathrm{th}}\in[0.02,0.04]$.
The above problem $\mathsf{Q}_{\mathrm{GS}}$ can be solved by following a similar APO approach in Section IV. 

Problem $\mathsf{Q}_{\mathrm{GS}}$ guarantees the average QoE over all images. In practice, one may also consider individual QoE constraints for each image frame, which leads to 
\begin{subequations}
		\label{P3}
		\begin{align}
			\mathsf{Q}_{\mathrm{GS}}':\,\,\,
			\min_{\mathcal{P},\mathcal{X}
			}~&\frac{1}{T} \sum_{t=1}^{T}p_{t} \label{Pa''}  \\
			\textrm{s.t.} ~~ &
   \Theta_t(x_t) \leq L_{\mathrm{th}}, \forall t
   \\
   &\textsf{constraints }(\ref{Pc_GS}), (\ref{Px_GS}), \ 
   p_{t}\geq 0, \ \forall t. 
		\end{align}
	\end{subequations}
This problem has a closed-form solution, which is given in the following proposition. 
\begin{proposition}
The optimal solution $\{x_t^\star,p_t^\star\}$ to $\mathsf{Q}_{\mathrm{GS}}'$ is
\begin{equation}\label{prop3}
(x_t^*,p_t^*) = 
\left\{\begin{array}{ll}
(1,\frac{\sigma^2}{|h_{t}|^2}
2^{
\frac{I}{\tau B}
}-\frac{\sigma^2}{|h_{t}|^2}), \ &\mathrm{if}~L_t>L_{\mathrm{th}} \\
(0,\frac{\sigma^2}{|h_{t}|^2}
2^{
\frac{S}{\tau B}
}-\frac{\sigma^2}{|h_{t}|^2}), \ &\mathrm{if}~L_t \leq L_{\mathrm{th}}
\end{array}\right.,
\end{equation} 
\end{proposition}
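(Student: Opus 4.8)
The plan is to exploit the complete separability of $\mathsf{Q}_{\mathrm{GS}}'$ across frames. Since the objective $\frac{1}{T}\sum_t p_t$ is a sum of per-frame terms and every constraint — the QoE constraint $\Theta_t(x_t)\le L_{\mathrm{th}}$, the rate constraint \eqref{Pc_GS}, the integrality \eqref{Px_GS}, and $p_t\ge 0$ — couples only the pair $(x_t,p_t)$, the problem decomposes into $T$ independent scalar problems, one per $t$. It therefore suffices to show that \eqref{prop3} solves the $t$-th subproblem $\min_{x_t\in\{0,1\},\,p_t\ge 0}\, p_t$ subject to $L_t(1-x_t)\le L_{\mathrm{th}}$ and $\tau B\log_2(1+|h_t|^2 p_t/\sigma^2)\ge x_t I+(1-x_t)S$.

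First I would eliminate $p_t$ for a fixed $x_t$. The rate constraint is equivalent to $p_t\ge \frac{\sigma^2}{|h_t|^2}\big(2^{(x_t I+(1-x_t)S)/(\tau B)}-1\big)$, and because the objective is strictly increasing in $p_t$ while the feasible set in $p_t$ is a half-line, the optimal power must meet this bound with equality (the same monotonicity reasoning used for the ranking solution in \textbf{Proposition 2}). This gives $p_t^\star=\frac{\sigma^2}{|h_t|^2}2^{I/(\tau B)}-\frac{\sigma^2}{|h_t|^2}$ when $x_t=1$ and $p_t^\star=\frac{\sigma^2}{|h_t|^2}2^{S/(\tau B)}-\frac{\sigma^2}{|h_t|^2}$ when $x_t=0$, reducing the subproblem to a binary choice between these two candidates.

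Next I would compare the two candidates. The choice $x_t=1$ makes $\Theta_t(1)=L_t\cdot 0=0\le L_{\mathrm{th}}$, so it is always feasible, and since $\mathsf{Q}_{\mathrm{GS}}'$ carries no power-budget constraint the associated $p_t^\star\ge 0$ always exists. The choice $x_t=0$ is feasible if and only if $L_t\le L_{\mathrm{th}}$. When it is feasible it yields strictly smaller power than $x_t=1$, because $S\ll I$ together with the strict monotonicity of $z\mapsto 2^{z/(\tau B)}$ gives $2^{S/(\tau B)}-1<2^{I/(\tau B)}-1$; hence $x_t=0$ is optimal whenever $L_t\le L_{\mathrm{th}}$. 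When $L_t>L_{\mathrm{th}}$ the QoE constraint rules out $x_t=0$, forcing $x_t=1$. Assembling these per-frame optima recovers exactly \eqref{prop3}.

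The argument is essentially routine; the only point requiring care is the boundary case $L_t=L_{\mathrm{th}}$ — the non-strict QoE constraint keeps $x_t=0$ feasible there, which is why \eqref{prop3} assigns $x_t=0$ on the set $\{L_t\le L_{\mathrm{th}}\}$ rather than $\{L_t<L_{\mathrm{th}}\}$ — together with confirming that no hidden coupling across frames survives the reduction to the per-frame view. Neither constitutes a genuine obstacle, so I expect the proof to be short.
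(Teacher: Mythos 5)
Your proposal is correct and follows essentially the same route as the paper's proof: activate the rate constraint \eqref{Pc_GS} to pin down $p_t$ as a function of $x_t$, then observe that the resulting power is increasing in $x_t$ so that $x_t=0$ is preferred whenever the QoE constraint $L_t(1-x_t)\le L_{\mathrm{th}}$ permits it. Your explicit remarks on per-frame separability and the boundary case $L_t=L_{\mathrm{th}}$ are consistent with, and slightly more detailed than, the paper's argument.
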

\begin{proof}
We first observe that the optimal solution must activate constraint \eqref{Pc_GS}, which gives 
\begin{align}
    p_{t}=\frac{\sigma^2\left[2^{\frac{(I-S)x_t + S}{\tau B}}-1\right]}{|h_{t}|^2}.
\end{align}
Substituting this equation into the cost function of $\mathsf{Q}_{\mathrm{GS}}'$, the problem is equivalently rewritten as 
	\begin{subequations}
		\begin{align}
			\min_{\mathcal{X}
			}~&\frac{1}{T} \sum_{t=1}^{T}\frac{\sigma^2\left[2^{\frac{(I-S)x_t + S}{\tau B}}-1\right]}{|h_{t}|^2} \label{Pa'}  \\
			\textrm{s.t.} ~~ &
  L_t(1-x_t) -L_{\mathrm{th}}\leq 0, \ x_t(1-x_t)=0, \ \forall t. \label{22b}
		\end{align}
	\end{subequations}
The objective function is a monotonically increasing function of $x_t$. 
This implies that changing $x_t$ from $1$ to $0$ always reduces the power. 
Combining the constraints \eqref{22b}, we immediately obtain \eqref{prop3}, and the proof is completed.
\end{proof}

According to \textbf{Proposition 3}, the frame transmit power $p_t$ is inversely proportional to the wireless channel gain $|h_{t}|^2$. 
However, it is exponentially dependent on the content data volume $(I,S)$, which is further controlled by GS losses $\{L_t\}$.
The above observations disclose that in GSMR systems, the GS parameters will have more significant impacts on the physical layer design than those of the wireless channels. 
Therefore, the principle of GSMR is to allocate more power to frames associated with high GS losses. 

\textbf{Proposition 3} can be used to evaluate the communication overhead saving brought by GSMR compared to RoboMR. 
Specifically, we compute the power consumption of $\sum p_t^{\mathrm{MR}}$, where $p_t^{\mathrm{MR}}$ is the solution to $\mathsf{Q}_{\mathrm{GS}}$ under condition $\{x_t=1|\forall t\}$. 
This gives 
\begin{align}
&p_t^{\mathrm{MR}}=\frac{\sigma^2(2^{\frac{I}{\tau B}}-1)}{|h_{t}|^2}.
\end{align}
{
Thus, we have
\begin{align}
\Delta P
&=
\sum_{t=1}^Tp_t^{\mathrm{MR}}-\sum_{t=1}^T p_t^*
\nonumber\\
&=
\sum_{t=1}^T\frac{\mathbb{I}_{L_t\leq L_{\mathrm{th}}}(L_t)(2^{\frac{I}{\tau B}}-
2^{\frac{S}{\tau B}}
)}
{|h_{t}|^2}
,
\end{align}
where indicator function $\mathbb{I}_{\mathcal{C}}(x)=1$ if $x\in\mathcal{C}$ and zero otherwise.}
We refer to this quantity as \textbf{GS power saving factor}. 
It can be seen that this factor is proportional to $\sum_{t=1}^T\frac{\mathbb{I}_{L_t\leq L_{\mathrm{th}}}(L_t)}{|h_{t}|^2}$, meaning that the power saving is determined by the percentage of qualified GS images.

{
\subsection{Multi-Robot GSCLO}

In multi-robot settings, the key is to avoid multi-user interference during data uploading. 
A common way is to adopt multi-antenna signal processing at the server \cite{wang2020angle}, and the aggregated signal received at edge server at time $t$ becomes
\begin{equation}
    \mathbf{z}_t = \sum_{k=1}^K x_{k,t}\mathbf{h}_{k,t} s_{k,t}+\mathbf{n}_{t},
    \label{eq:signal}
\end{equation}
where $N$ is the number of antennas, $K$ is the number of robots, $\mathbf{z}_t=[z_{1,t},\cdots,z_{N,t}]^T\in\mathbb{C}^{N\times 1}$ is the received signal, $\mathbf{h}_{k,t} \in \mathbb{C}^{N \times 1}$ denotes the channel from the $k$-th robot to the edge server, and $\mathbf{n}_{t}\in\mathbb{C}^{N\times 1}$ is noise vector with zero mean and covariance $\mathbb{E}\{\mathbf{n}_t\mathbf{n}_t^{H}\}=\sigma^2\mathbf{I}_{N}$ ($\mathbf{I}_{N}$ is the identity matrix of size $N\times N$). 
Let $\mathbf{H}_{t}=[\mathbf{h}_{1,t},\ldots,\mathbf{h}_{K,t}]$, $\forall k$.
By applying the zero-forcing receiver \cite{wang2020angle} $[\mathbf{w}_{1,t}^{H},\mathbf{w}_{2,t}^{H},...,\mathbf{w}_{K,t}^{H}]^{T}=(\mathbf{H}_{t}^{H}\mathbf{H}_{t})^{-1}\mathbf{H}_{t}^{H}$ to recover the uplink signal, i.e., $\widehat{z}_{k,t}\!=\!\mathbf{w}_{k,t}^{H}\mathbf{z}_{t}\!$, we have $\mathbf{w}_{i,t}^{H}\mathbf{h}_{j,t}=0$, $\forall i \neq j$. 
Therefore, the data-rate of robot $k$ is
\begin{align}
&R_{k,t}(p_{k,t})=B\mathrm{log}_2\left(1+\frac{H_{k,t}p_{k,t}}
	{\sigma^2}\right).
\end{align}
where 
\begin{align}
H_{k,t}=\frac{|\mathbf{w}_{k,t}^{H}\mathbf{h}_{k,t}|^{2}}{\|\mathbf{w}_{k,t}^{H}\|_2^{2}}.
\end{align}
Accordingly, problem $\mathsf{P}_{\mathrm{GS}}$ in multi-robot settings is formulated as 
	\begin{subequations}
		\label{P2}
		\begin{align}
			&\mathsf{M}_{\mathrm{GS}}:\,\,\,
			\min_{\{p_{k,t},x_{k,t}\}
			}~\frac{1}{T}\sum_{t=1}^T\sum_{k=1}^KL_{k,t}(1-x_{k,t})  \\
			\textrm{s.t.} ~~ & \tau R_{k,t}(p_{k,t})
            \geq x_{k,t} I + (1-x_{k,t})S,\ \forall k,t,
 \\  
     &\frac{1}{T} \sum_{t=1}^{T}p_{k,t} \leq P, 
   \ 
   p_{k,t}\geq 0, \ \forall k,t,
   \\
   & x_{k,t}\in\{0,1\}, \ \forall k,t.
		\end{align}
	\end{subequations}
Problem $\mathsf{M}_{\mathrm{GS}}$ has the same structure as problem $\mathsf{P}_{\mathrm{GS}}$, and can be readily solved using the APO algorithm.
}

\section{Experiments}\label{section7}

We implement the RoboMR system exploiting C++ and Python in ROS. 
The real $4\,\text{m}\times 4\,\text{m}$ Agilex (AGX) sandbox platform is shown in Fig.~\ref{platforms}a.
The virtual world constructed using the CARLA simulator \cite{carla} is shown in Fig.~\ref{platforms}b, where two additional virtual objects (marked in white boxes) are added to the virtual world. 
The CARLA simulator is implemented on a workstation with a $3.7$\,GHZ AMD Ryzen 9 5900X CPU and an NVIDIA $3090$\,Ti GPU.

As shown in Fig.~\ref{platforms}c, we adopt a robot named LIMO, which has a 2D lidar, an RGBD camera, and an onboard NVIDIA Jetson Nano computing platform for executing the localization and navigation packages. 
To realize RoboMR, the robot is connected to the CARLA simulator via ROS bridge \cite{ros-bridge}.
The robot route is marked as the blue line in Fig.~\ref{platforms}d.
We navigate the LIMO robot along the route for two rounds. 
In the first round, the robot collects $H=288$ frames (including images and poses) for training a GS model \cite{kerbl20233d}. 
In the second round, the robot collects $T=288$ frames for evaluation. 
Each image has a data volume of $I=537.6$\,Kbits (i.e., $67.2$\,KBytes). 
The robot pose has $6$ floating-point numbers, with $S=192$\,bits.

\begin{figure}[!t]
	\centering
	\begin{subfigure}{0.45\linewidth}
		\centering
		\includegraphics[width=\linewidth]{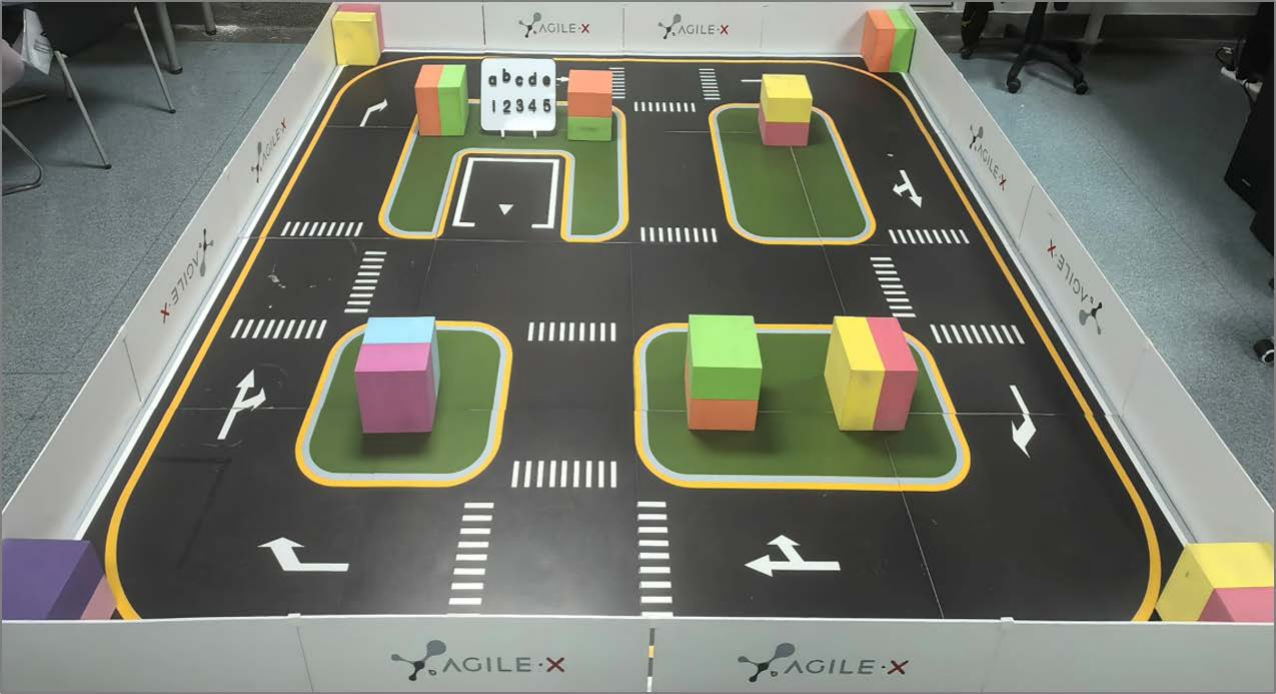}
		\caption{Real AGX.}
	\end{subfigure}
    	\begin{subfigure}{0.45\linewidth}
		\centering
		\includegraphics[width=\linewidth]{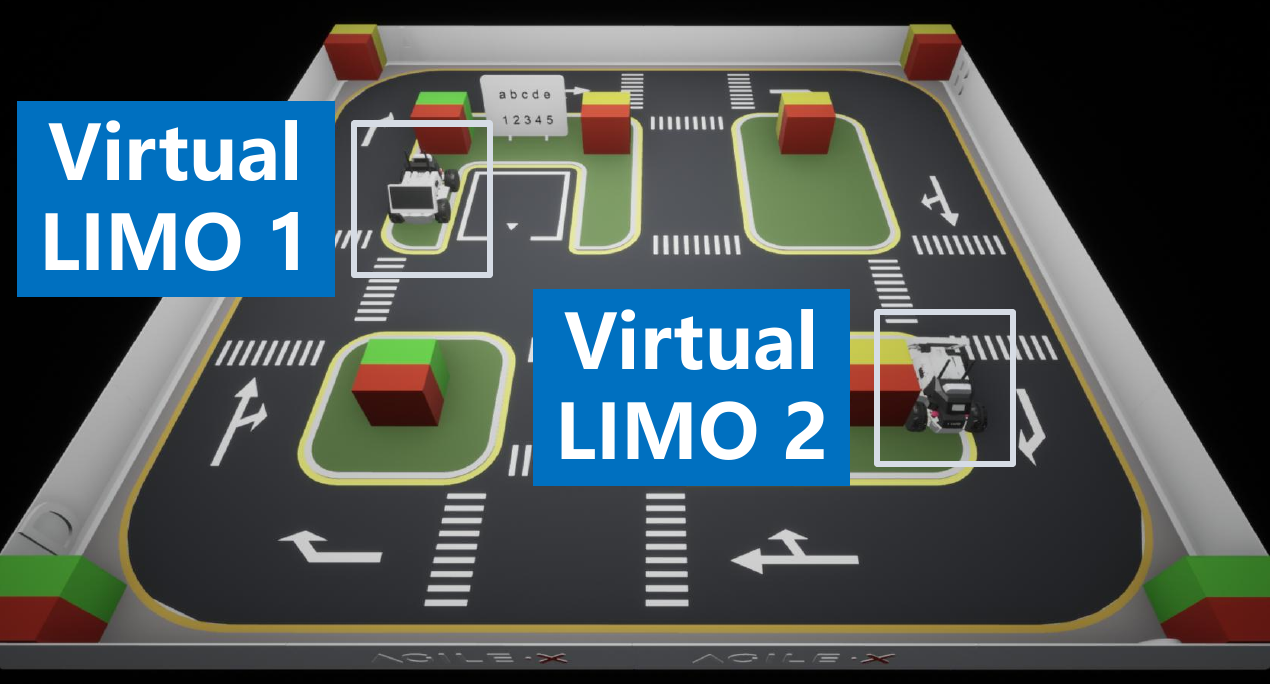}
		\caption{Virtual AGX.}
	\end{subfigure}
    	\begin{subfigure}{0.4\linewidth}
		\centering
		\includegraphics[width=\linewidth]{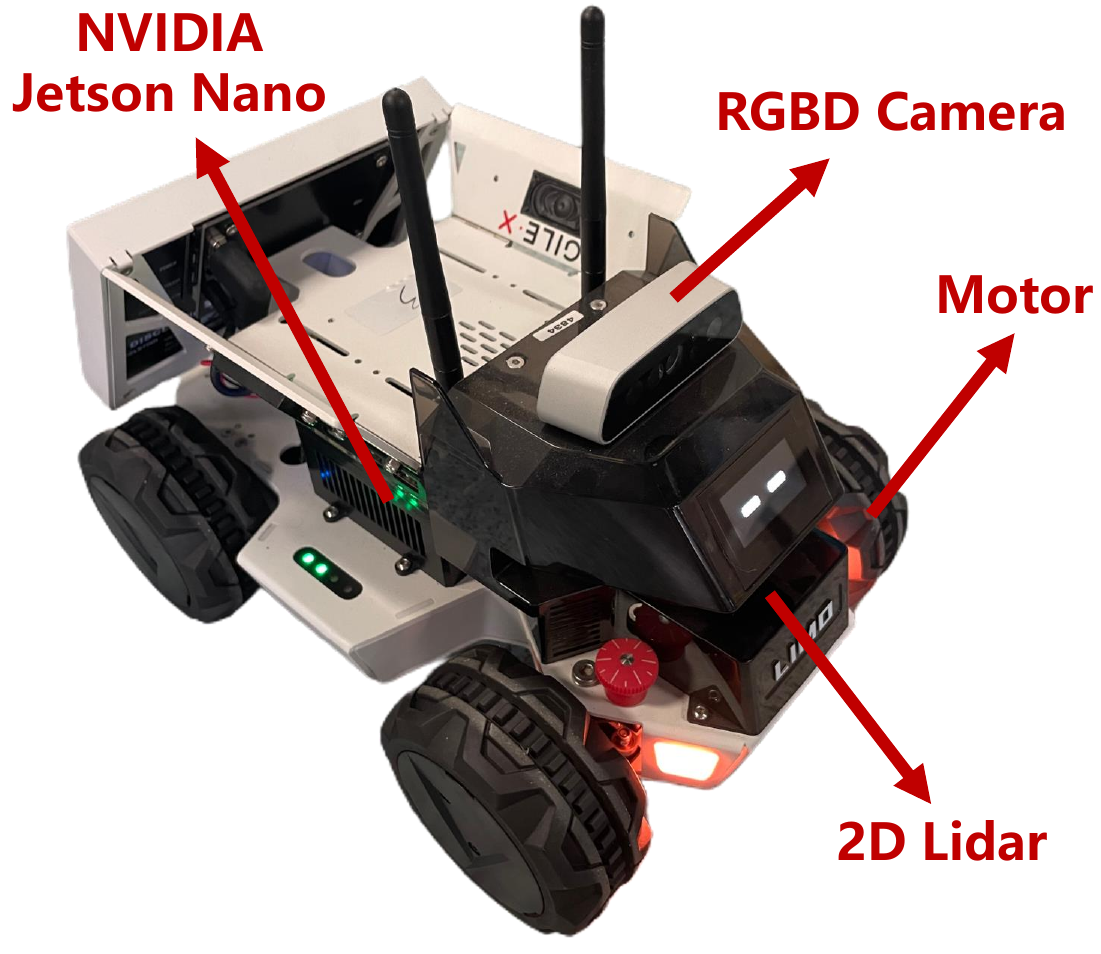}
		\caption{LIMO robot.}
	\end{subfigure}
    \begin{subfigure}{0.35\linewidth}
		\centering
		\includegraphics[width=\linewidth]{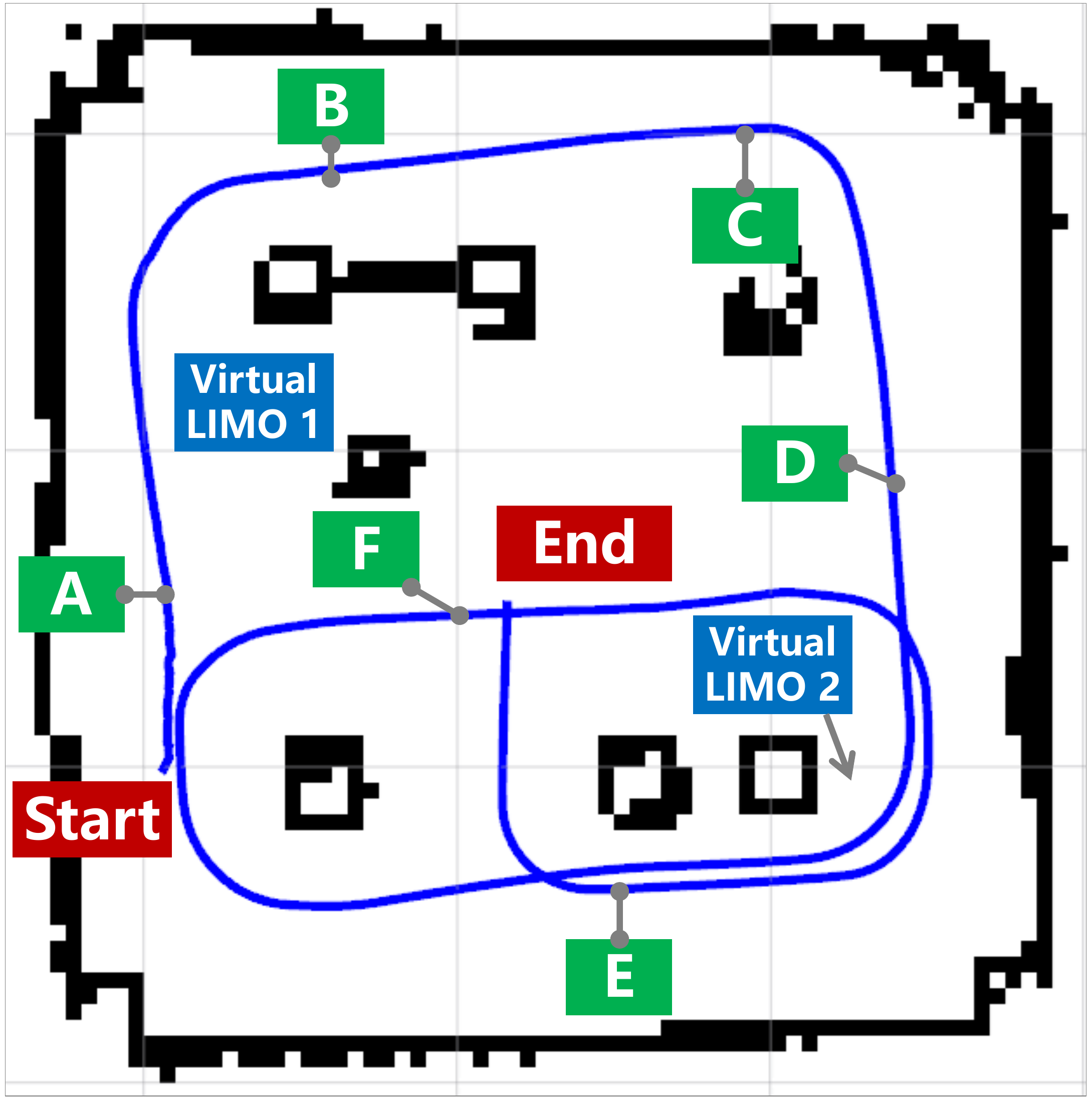}
		\caption{Robot route.}
	\end{subfigure}
	\caption{Implementation of the RoboMR platform. In (d), the blue line represents robot trajectory, red boxes represent start and end positions, green boxes represent image poses, and blue boxes represent positions of virtual agents, respectively.}
	\label{platforms}
    
\end{figure}

\begin{figure}[!t]
	\centering
	\begin{subfigure}{0.44\linewidth}
		\centering
		\includegraphics[width=\linewidth]{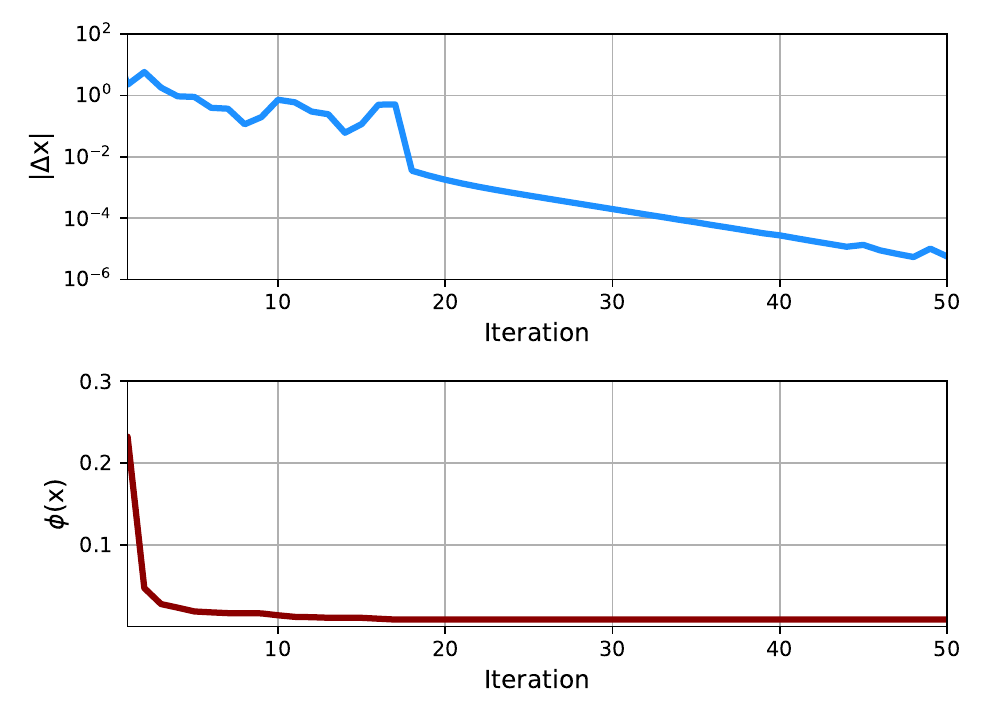}
		\caption{APO.}
	\end{subfigure}
 	\begin{subfigure}{0.52\linewidth}
		\centering
		\includegraphics[width=\linewidth]{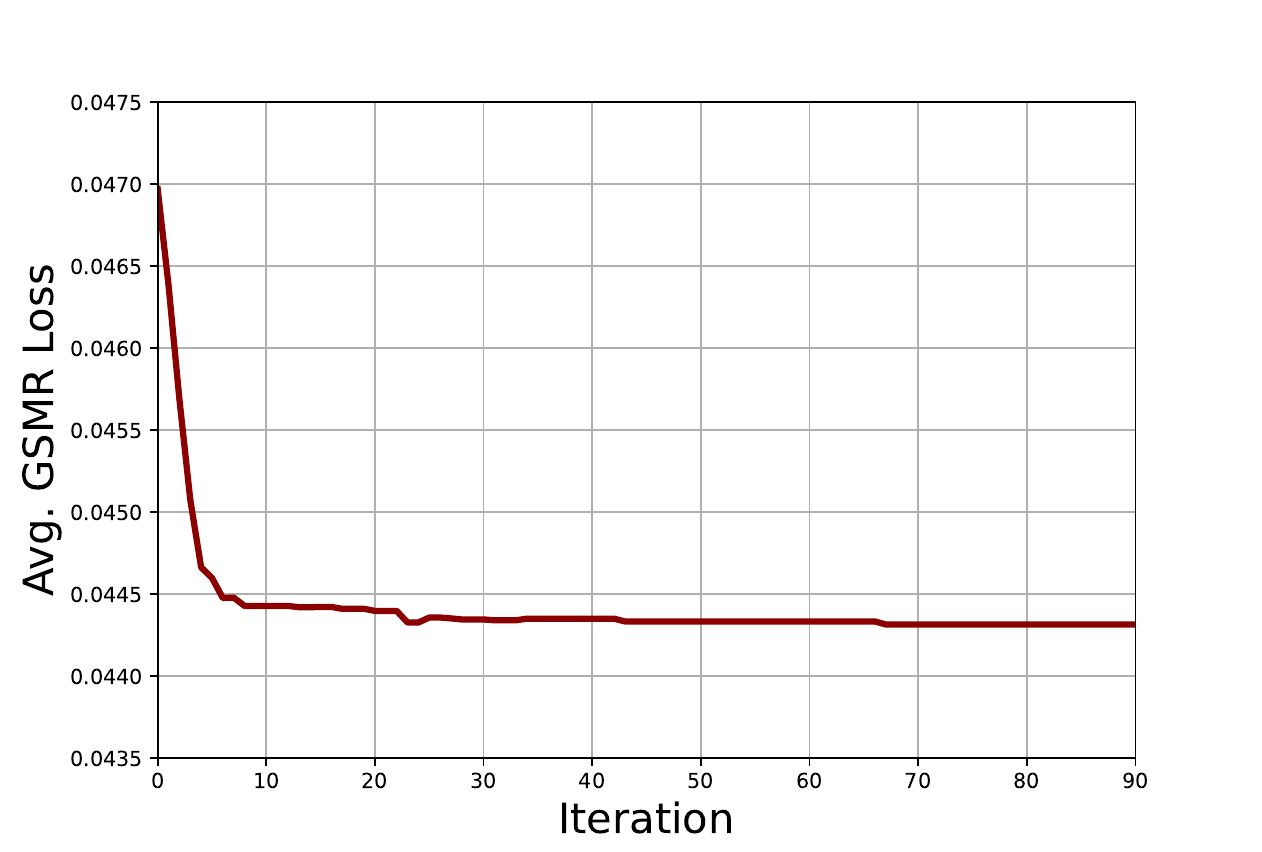}
		\caption{BILS.}
	\end{subfigure}
	\caption{{Convergence analysis. (a) $\|\Delta\mathbf{x}\|$ and $\phi(\mathbf{x})$ versus $n$ for APO; (b) GSMR loss versus $n$ for BILS.}}
	\label{fig:converge}
\end{figure}

\begin{figure}[!t]
	\centering
	\begin{subfigure}{0.48\linewidth}
		\centering
		\includegraphics[width=\linewidth]{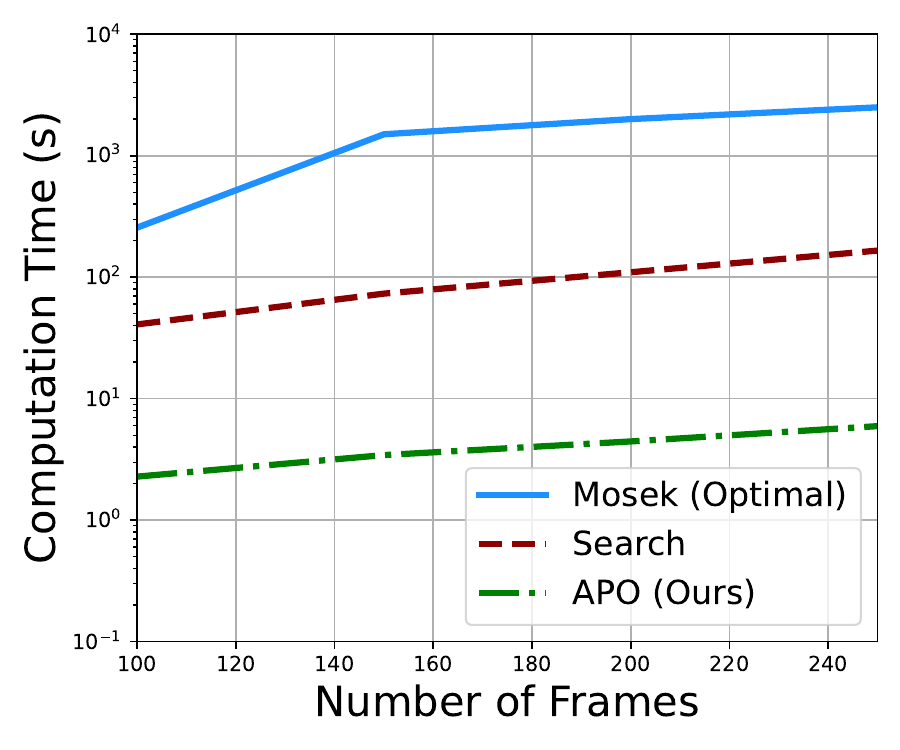}
		\caption{Computation time.}
	\end{subfigure}
 	\begin{subfigure}{0.48\linewidth}
		\centering
		\includegraphics[width=\linewidth]{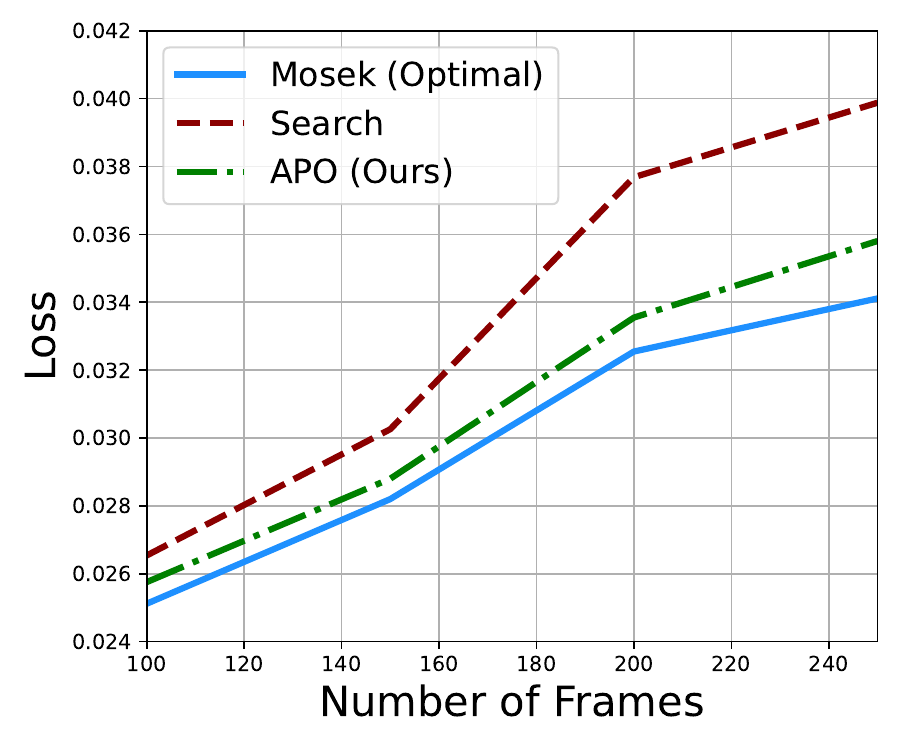}
		\caption{Loss comparison.}
	\end{subfigure}
	\caption{Comparison of computation time and GSMR loss.}
	\label{compare1}
\end{figure}

Without otherwise specified, the time step $\tau=0.1$\,s, bandwidth $B=1$\,MHz, noise power $\sigma^2=-60$\,dBm \cite{wang2020angle}.
The channel is assumed to be Rician fading, i.e., \cite{wang2020angle}
\begin{align}\label{ht}
&h_t=\sqrt{\varrho_0 d_t^{-\alpha}}\Big(\sqrt{\frac{\mathcal{K}}{1+\mathcal{K}}}\,g_t^{\mathrm{LOS}}
+\sqrt{\frac{1}{1+\mathcal{K}}}\,g_t^{\mathrm{NLOS}}\Big),
\end{align}
where $\varrho_0=-30$\,dB is the pathloss at $1\,\mathrm{m}$, $d_t$ is the robot-server distance (i.e., $10$\,m), and $\alpha=3$ is the pathloss exponent.
Notice that $\mathcal{K}$ is the Rician K-factor accounting for propagation effects of the ling-of-sight (LoS) and non-LoS links.
The LoS component is $g_t^{\mathrm{LOS}}=\mathrm{exp}\left(-\mathrm{j}\pi\,\mathrm{sin}\,\psi_t\right)$ with $\psi_t\in\mathcal{U}(-\pi,\pi)$ being the phase angle, and the non-LoS component is $g_t^{\mathrm{NLOS}}\sim\mathcal{CN}(0,1)$ \cite{wang2020angle}.
{All quantitative results are obtained by averaging over $50$ random simulation runs, with independent channels in each run. }

We compare GSMR with APO to the following baselines:
\begin{itemize}
   \item[1)] \textbf{RoboMR}: Vanilla RoboMR \cite{li2024seamless} uploading all images (i.e., $\{x_t=1\}$);
   \item[2)] \textbf{RoboGS}: Vanilla GS \cite{kerbl20233d} with no image uploading (i.e., $\{x_t=0\}$);
   \item[3)] \textbf{MaxRate}: GSMR with water-filling power allocation for sum-rate maximization \cite{zhang2024efficient};
   \item[4)] \textbf{Fairness}: GSMR with max-min fairness power allocation \cite{zheng2016wireless};
   \item[5)] \textbf{Ranking}: GSMR with \eqref{ranking};
   \item[6)] \textbf{Search}: GSMR by solving $\mathsf{P}_{\mathrm{GS}}$ with iterative local search \cite{neumann2007randomized};
   \item[7)] \textbf{Rounding}: GSMR by solving $\mathsf{P}_{\mathrm{GS}}$ with continuous relaxation and rounding \cite{hubner2014rounding};
   \item[8)]\textbf{Mosek}: GSMR by solving $\mathsf{P}_{\mathrm{GS}}$ with Mosek \cite{diamond2016cvxpy}.
\end{itemize}

\begin{figure*}[!t]
	\centering
	\begin{subfigure}{0.24\linewidth}
		\centering
		\includegraphics[width=1\linewidth]{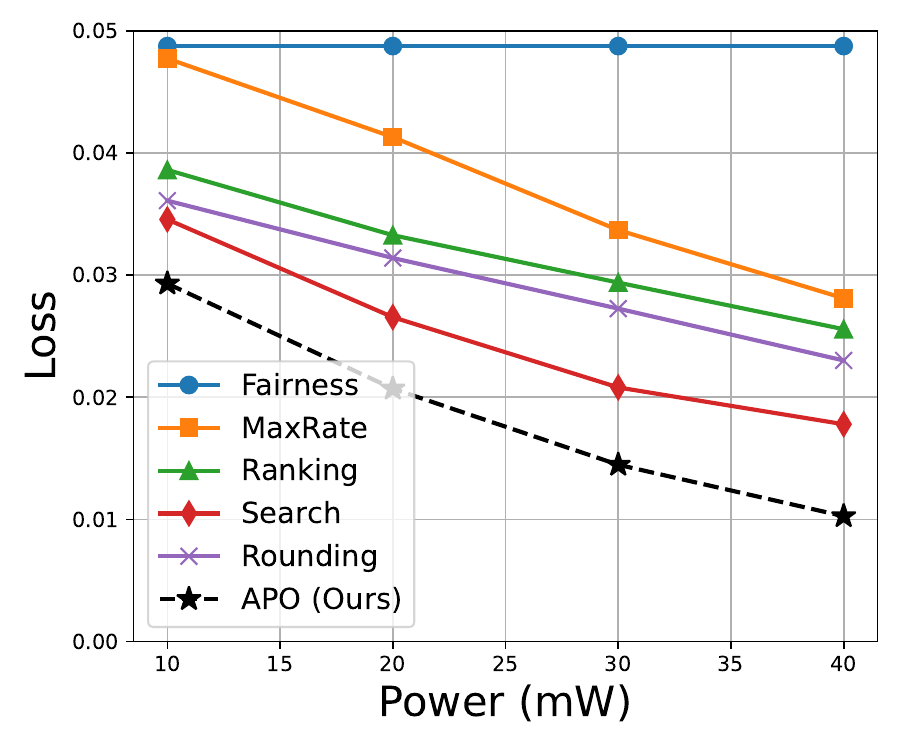}
		\caption{Loss versus $P$.}
	\end{subfigure}
 	\begin{subfigure}{0.24\linewidth}
		\centering
		\includegraphics[width=1\linewidth]{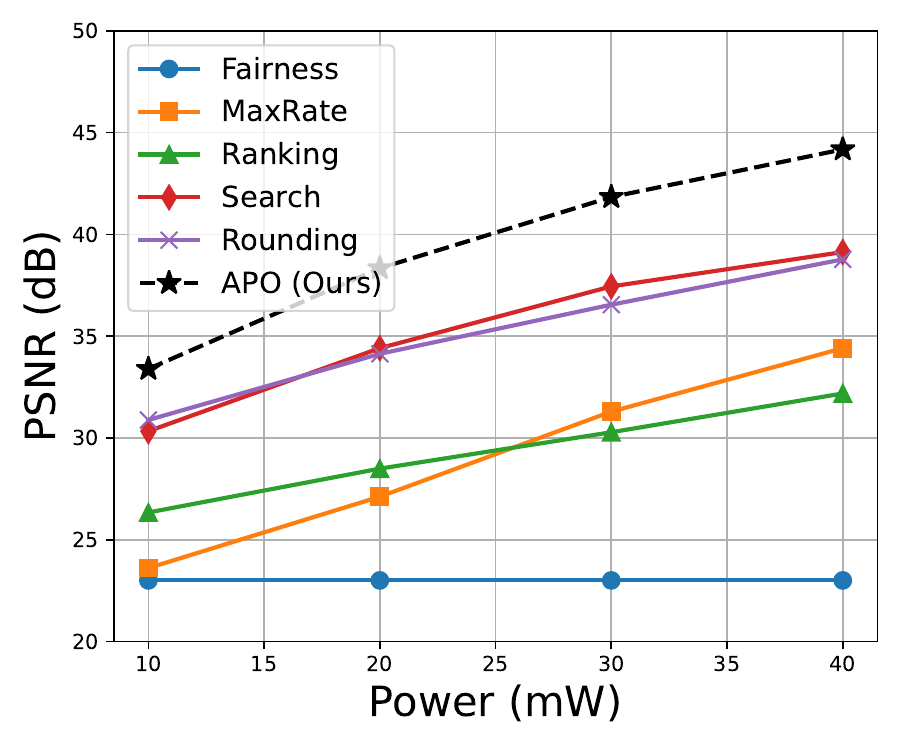}
		\caption{PSNR versus $P$.}
	\end{subfigure}
     	\begin{subfigure}{0.24\linewidth}
		\centering
		\includegraphics[width=1\linewidth]{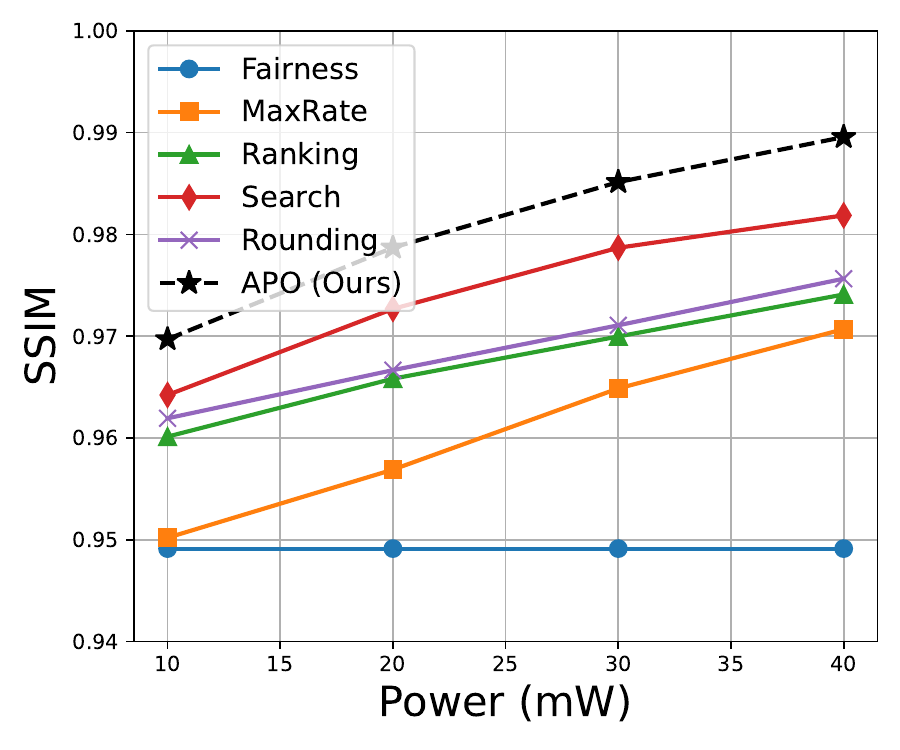}
		\caption{SSIM versus $P$.}
	\end{subfigure}
         	\begin{subfigure}{0.24\linewidth}
		\centering
		\includegraphics[width=1\linewidth]{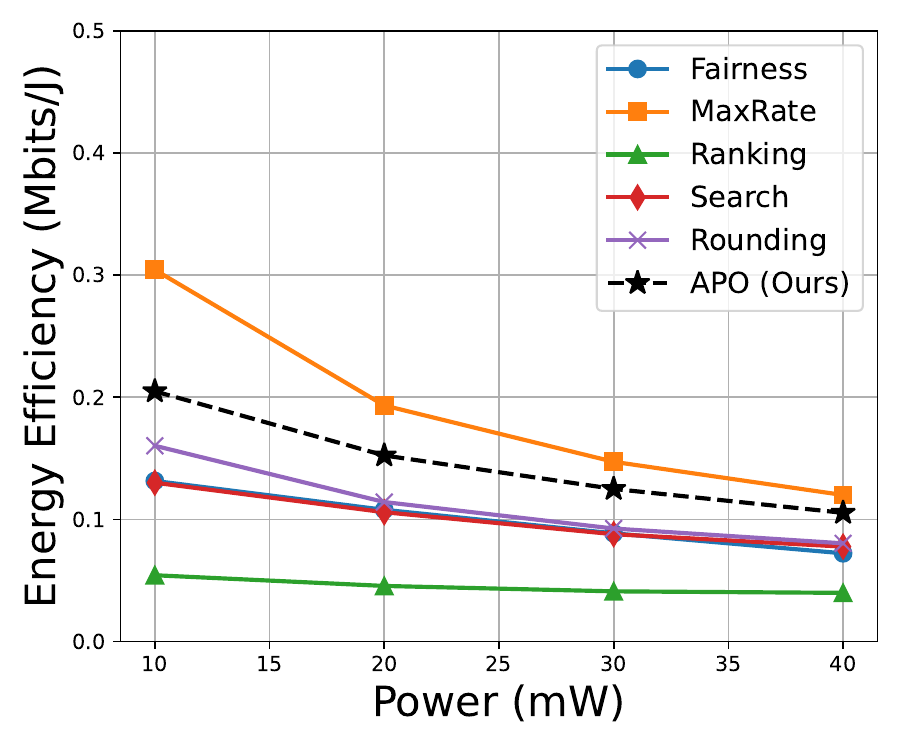}
		\caption{Energy efficiency versus $P$.}
	\end{subfigure}
	\caption{Comparison of loss, PSNR, SSIM, and energy efficiency.}
	\label{compare2}
    \vspace{-0.1in}
\end{figure*}

\begin{figure}[t]
    \centering
    \includegraphics[width=0.49\textwidth]{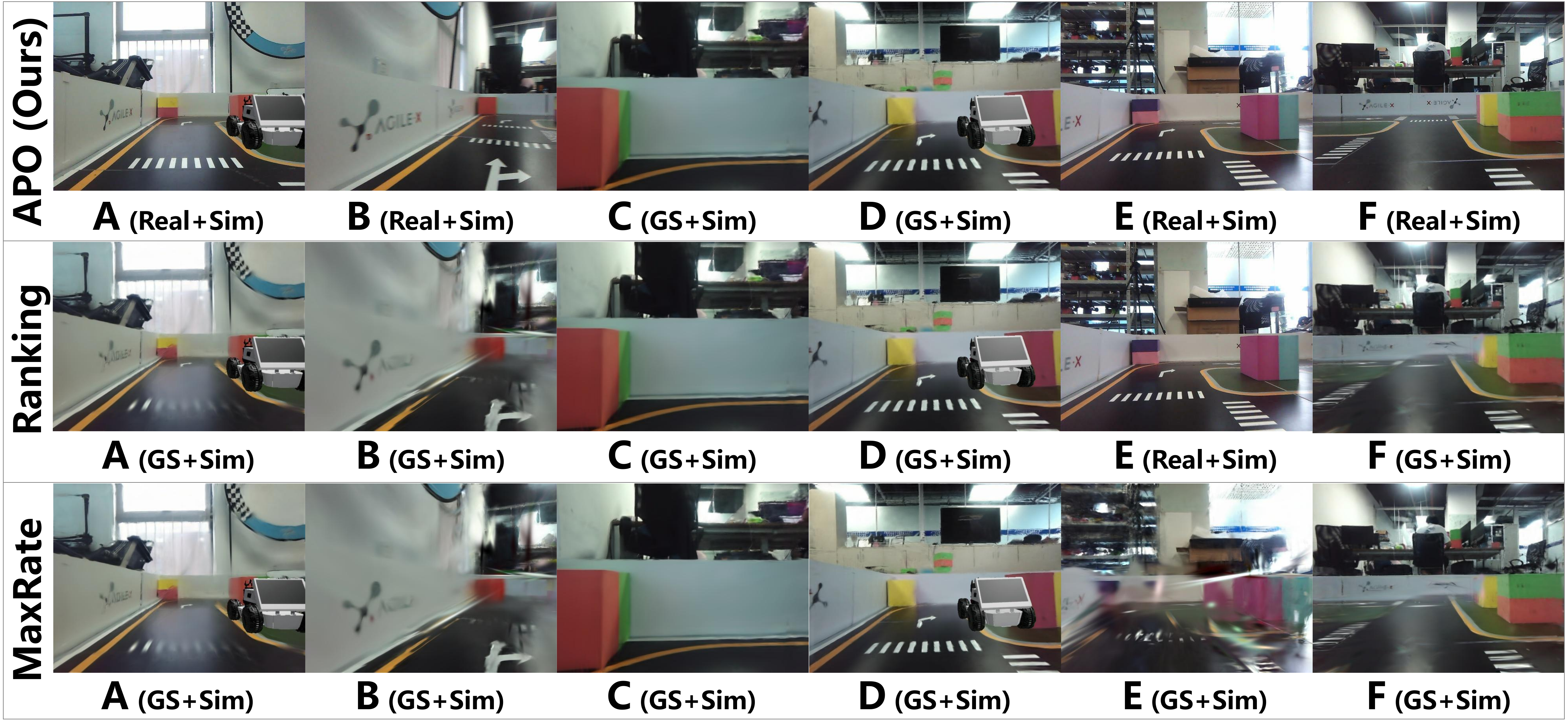}
    \caption{Visualization of six image frames, where their positions are marked in Fig. 4d.}
    \label{fig:demo1}
    \vspace{-0.1in}
\end{figure}

\begin{figure}[!t]
	\centering
	\begin{subfigure}{0.32\linewidth}
		\centering
		\includegraphics[width=\linewidth]{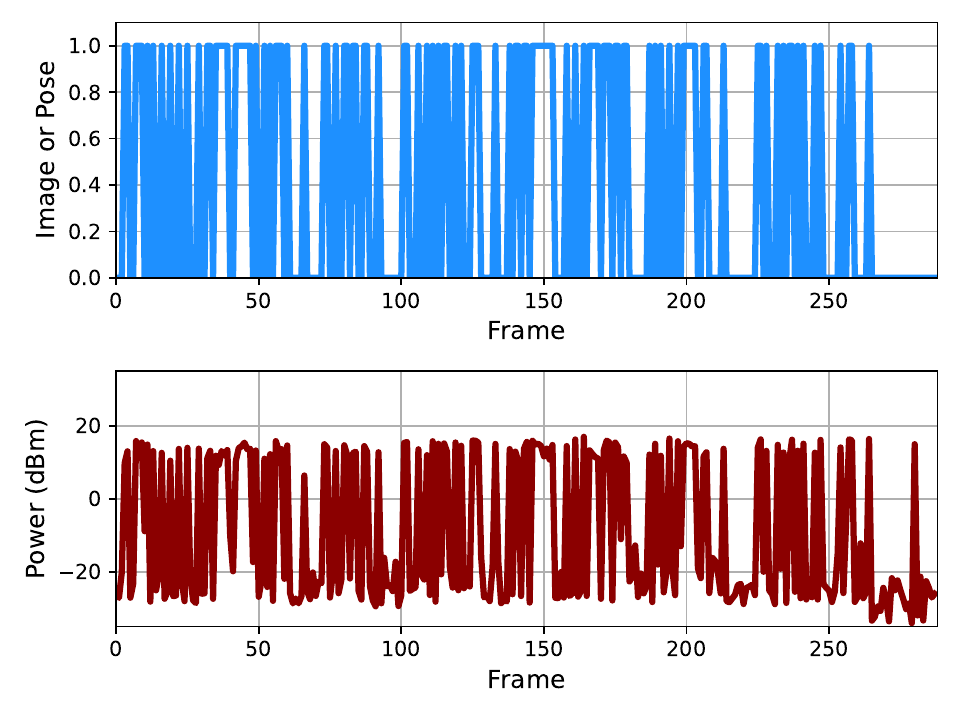}
      
		\caption{APO (Ours).}
	\end{subfigure}
	\centering
	\begin{subfigure}{0.32\linewidth}
		\centering
		\includegraphics[width=\linewidth]{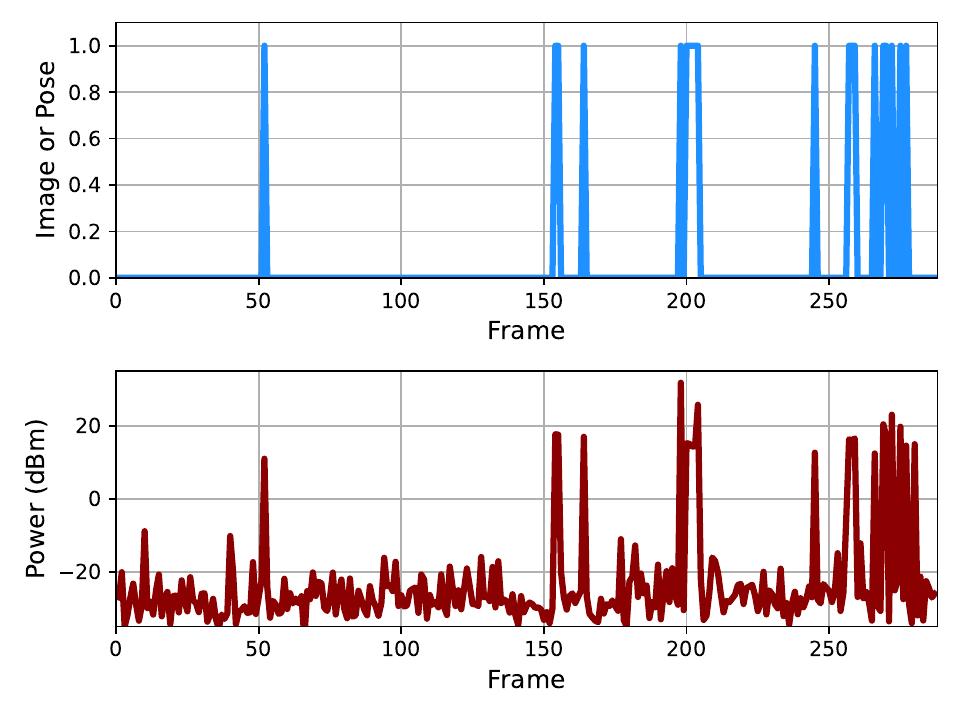}
      
		\caption{Ranking.}
	\end{subfigure}
 	\begin{subfigure}{0.32\linewidth}
		\centering
		\includegraphics[width=\linewidth]{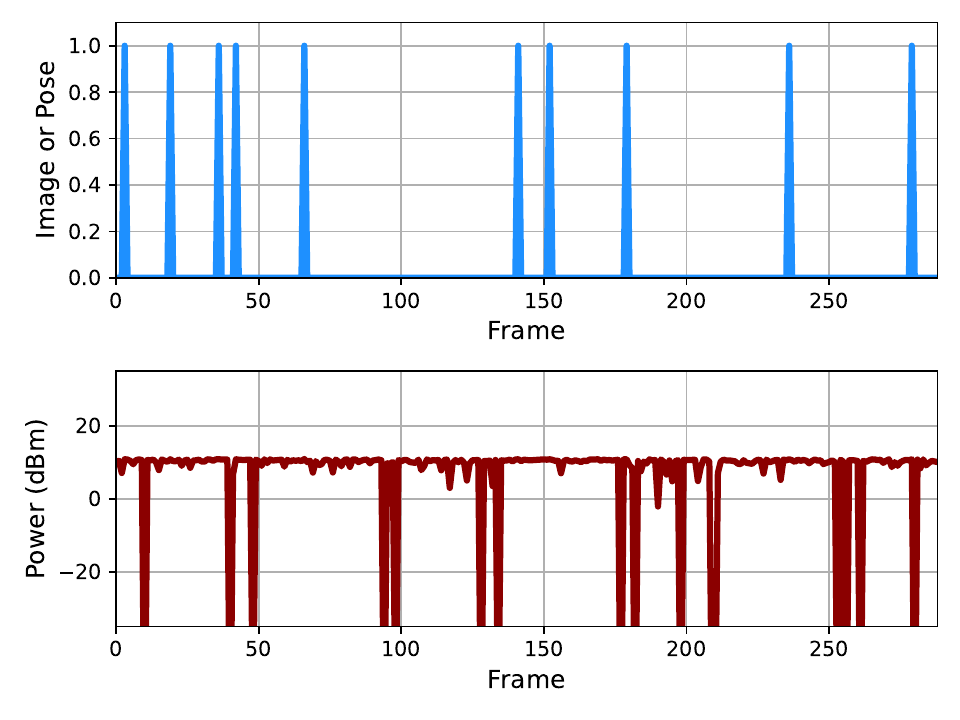}
      
		\caption{MaxRate.}
	\end{subfigure}
	\caption{Content switching $\{x_t\}$ and power profiles $\{p_t\}$.}
     \label{fig:demo2}
     
\end{figure}

\begin{figure}[!t]
	\centering
	\begin{subfigure}{0.49\linewidth}
		\centering
		\includegraphics[width=\linewidth]{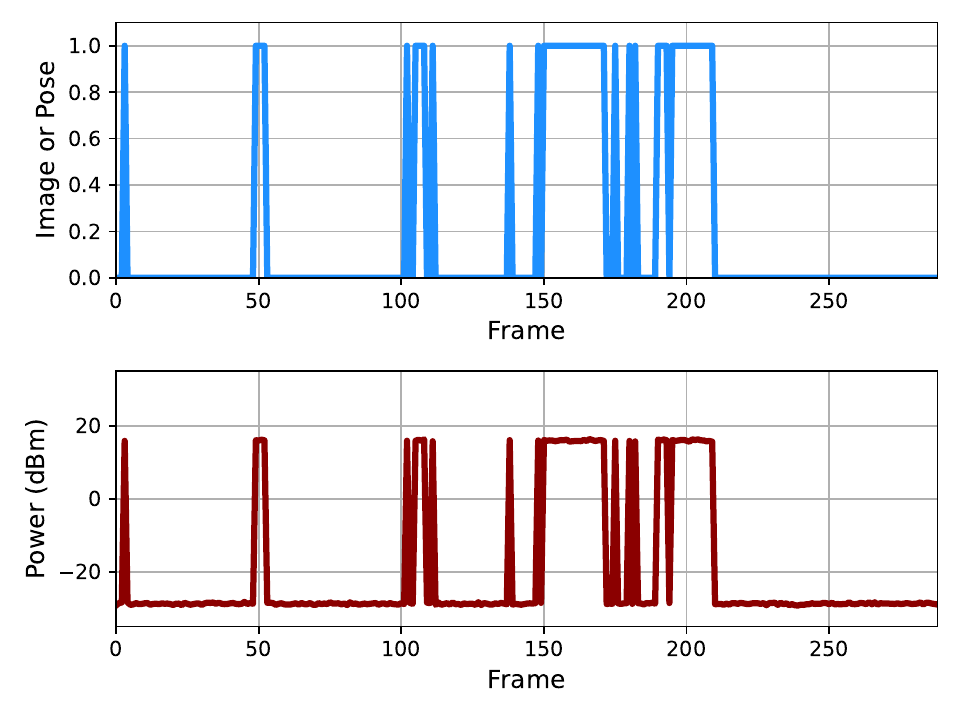}
      
		\caption{APO (Ours).}
	\end{subfigure}
	\centering
	\begin{subfigure}{0.49\linewidth}
		\centering
		\includegraphics[width=\linewidth]{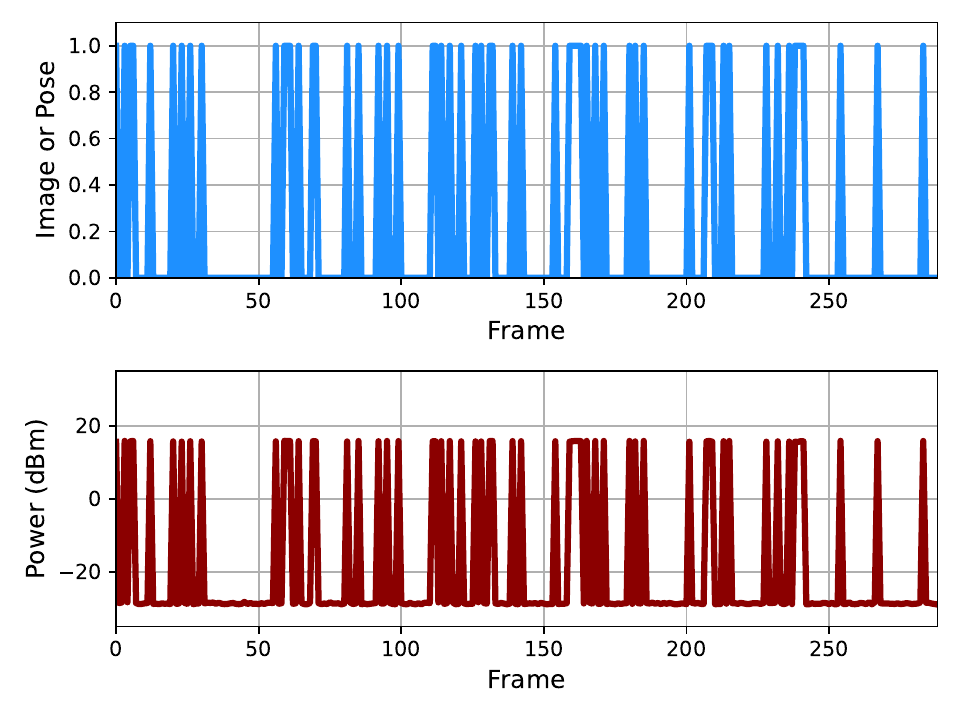}
      
		\caption{MaxImg.}
	\end{subfigure}
	\caption{Comparison between APO and MaxImg.}
     \label{fig:maximg}
     
\end{figure}

\begin{figure*}[!t]
	\centering
	\begin{subfigure}{0.32\linewidth}
		\centering
		\includegraphics[width=1\linewidth]{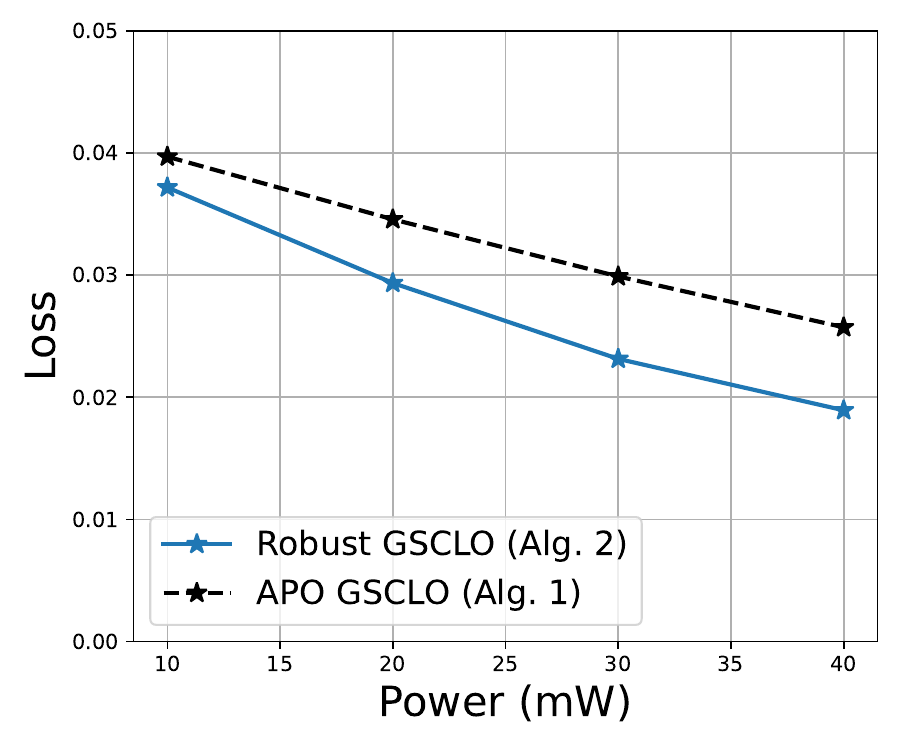}
		\caption{Loss versus $P$.}
	\end{subfigure}
 	\begin{subfigure}{0.32\linewidth}
		\centering
		\includegraphics[width=1\linewidth]{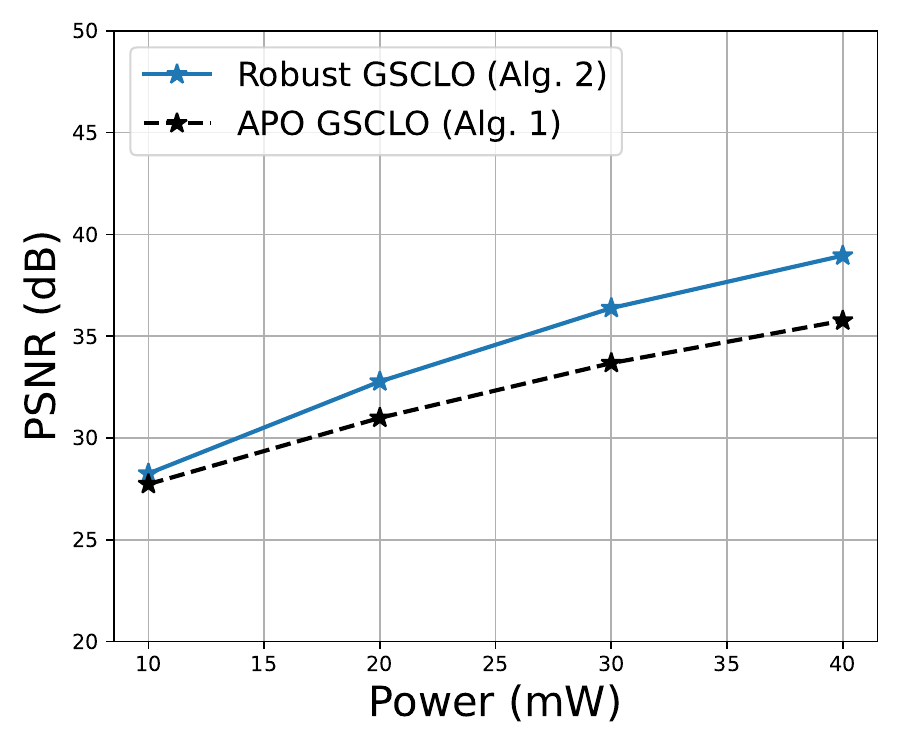}
		\caption{PSNR versus $P$.}
	\end{subfigure}
     \begin{subfigure}{0.32\linewidth}
		\centering
		\includegraphics[width=1\linewidth]{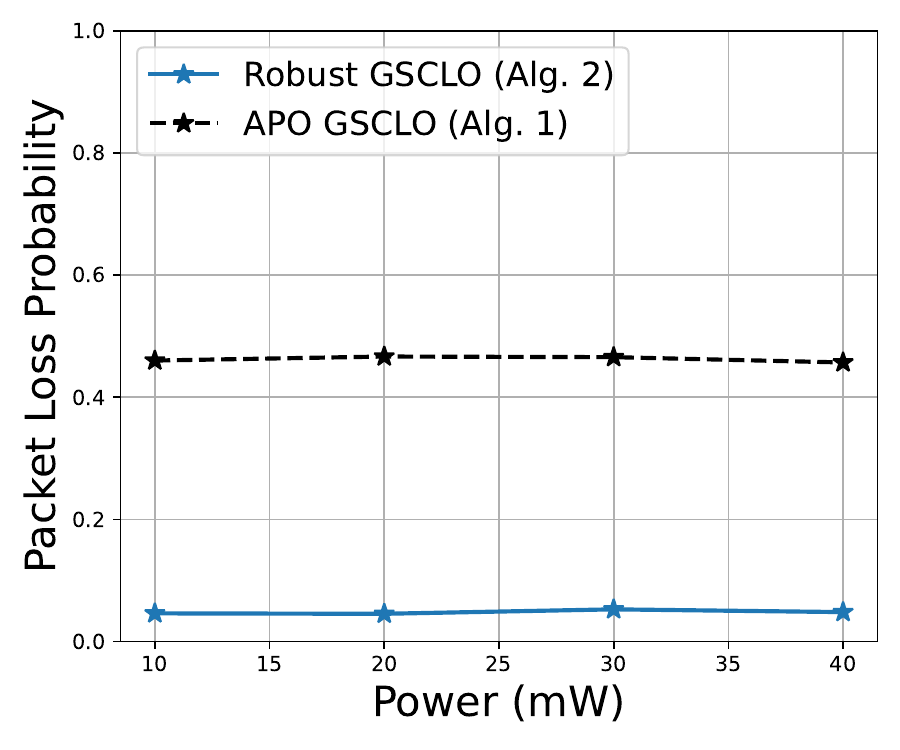}
		\caption{Packet loss probability versus $P$.}
	\end{subfigure}
	\caption{Comparison of loss, PSNR, and packet loss under channel uncertainties.}
	\label{robust}
\end{figure*}

\subsection{Experiment 1: Convergence and Computation Time}

First, we conduct numerical experiments to validate the convergence of the proposed  Algorithm 1 (i.e., APO GSCLO).
Specifically, we consider the case of $P=5$\,mW and $\mathcal{K}=30$\,dB.
The variation between consecutive iterations $\|\Delta\mathbf{x}\|$ with $ \Delta\mathbf{x}=\mathbf{x}^{[n]}-\mathbf{x}^{[n-1]}$ versus the iteration count $n$ is shown in Fig.~\ref{fig:converge}a.
It can be seen that $\|\Delta\mathbf{x}\|$ falls below $10^{-4}$ after $35$ iterations.
This demonstrates the convergence of the proposed APO. 
The zero-one loss $\phi(\mathbf{x})=
\frac{1}{T}\sum_{t=1}^T x_{t}(1-x_t)$ versus the iteration count $n$ is also shown in Fig.~\ref{fig:converge}a. 
It can be seen that $\phi(\mathbf{x})$ is below $0.01$ after $20$ iterations, meaning that the APO algorithm converges to a point close to the desired binary solution.
{On the other hand, the convergence of the Algorithm 2 (i.e., Robust GSCLO) is shown in Fig.~\ref{fig:converge}b. 
It can be seen that the average GSMR loss of Algorithm 2 converges and stabilizes after $50$ iterations.}

Next, we conduct numerical experiments to evaluate the computation time of the proposed APO algorithm.
We consider the same setting as Experiment 1 (i.e, $P=5$\,mW and $\mathcal{K}=30$\,dB).
The computation time and associated MR loss for the Mosek, Search, and APO schemes are shown in Fig.~\ref{compare1}a and Fig.~\ref{compare1}b, respectively. 
The proposed APO reduces the computation time by at least $100$x compared to Mosek in Fig.~\ref{compare1}a, while guaranteeing close-to-optimal performance in Fig.~\ref{compare1}b.
The Search scheme leads to moderate computation time, but the worst MR performance. 
{Note that our method can be further accelerated by multi-core computing, efficient programming languages (e.g., C++), and receding horizon optimization.}

\subsection{Experiment 2: Comparison with Existing Benchmarks}

We compare the proposed GSMR with APO to all the benchmark schemes mentioned in Section VII.
We consider the case of $P=\{10,20,30,40\}$\,mW and $\mathcal{K}=0$\,dB. 
It can be seen from Fig.~\ref{compare2}a that as the power budget increases, the losses of all the simulated schemes are reduced. Moreover, no matter how the power budget varies, the proposed GSMR with APO achieves a loss below $0.03$, and consistently outperforms all the other schemes. 
In particular, the loss reduction is at least $10\%$. 
The PSNR performance of all the simulated schemes are shown in Fig.~\ref{compare2}b.
It can be seen that the proposed APO achieves a PSNR above $30$\,dB under all power budgets. 
Compared with all the other schemes, the improvement of PSNR is at least $5$\,dB, which is significant in RoboMR systems.
By comparing Fig.~\ref{compare2}a and Fig.~\ref{compare2}b, we can find that a PSNR from $30$\,dB to $40$\,dB roughly corresponds to a loss from $0.035$ to $0.02$. 
The SSIM performance of all the simulated schemes are shown in Fig.~\ref{compare2}c.
Again, the proposed GSMR with APO outperforms all the other schemes in terms of the SSIM metric. 
Especially at higher power budgets, our SSIM values are very close to $1$.
This demonstrates the all-round ability of APO.
{Lastly, the energy efficiencies of the simulated schemes are shown in Fig.~\ref{compare2}d.
It can be seen that the proposed APO achieves an energy efficiency close to the optimal bound\footnote{Under a fixed power consumption, the MaxRate scheme would achieve the optimal energy efficiency, since it maximizes the sum-rate using water-filling power allocation \cite{yu2004iterative}}, demonstrating its excellent balance between energy efficiency and MR performance. 
The Ranking scheme only considers the GSMR losses, thus leading to the worst efficiency.
Note that the energy efficiencies of all schemes decease as the power budget increases. 
This is because the transmit power acts as the denominator and the transmission rate increases slower than the transmit power.}

\subsection{Experiment 3: Visualizations}

To demonstrate the benefit brought by cross-layer optimization, we consider the case of $P=10$\,mW and $\mathcal{K}=0$\,dB, and compare APO with the MaxRate and Ranking schemes. The MaxRate represents the communication-centric scheme without considering MR requirements, while the Ranking scheme represents the GS-centric scheme without considering communication constraints.
The qualitative results are shown in Fig.~\ref{fig:demo1}.
It can be seen that all the images of APO have high qualities.
This corroborates Fig. \ref{fig:demo2}a, where the most images with high GS losses are uploaded.
In contrast, images B and F are blurred for the ranking scheme.
This is because the ranking scheme allocates more resources to frames with higher GS losses. However, several frames experience bad channel conditions, costing excessive power resources as shown in Fig. \ref{fig:demo2}b.
In addition, image E is distorted for the MaxRate scheme. 
This is because the MaxRate scheme allocates excessive resources to frames with good channel conditions for rate maximization as shown in Fig. \ref{fig:demo2}c, ignoring the MR requirements.

\begin{table}[!t]
    \centering
    \caption{Comparison Between MaxImg and APO}
    \label{ssim}
        \scalebox{0.65}{
    \begin{tabular}{ccccccc}
        \toprule
        Metric & \multicolumn{2}{c}{Loss}  & \multicolumn{2}{c}{PSNR} &  \multicolumn{2}{c}{SSIM} \\
        \cline{2-3} \cline{4-5} \cline{6-7} 
        $P$ & MaxImg & APO (Ours) & MaxImg & APO (Ours) & MaxImg & APO (Ours) \\
        \midrule
        5\,mW & $0.04324$ & $0.03766$ & $26.4852$ & $26.9486$ & $0.95488$ & $0.96125$ \\
        6\,mW & $0.04142$ & $0.03694$ & $27.1953$ & $27.6038$ & $0.95670$ & $0.96201$ \\
        7\,mW & $0.04016$ & $0.03537$ & $27.9208$ & $28.3188$ & $0.95812$ & $0.96362$ \\
        8\,mW & $0.03870$ & $0.03411$ & $28.5713$ & $29.0831$ & $0.95992$ & $0.96506$ \\
        \bottomrule
    \end{tabular}
    }
\end{table}

One may wonder whether the superior performance of APO is due to more images uploaded to the server. 
To see this, we further compare the proposed APO method with the following:
\begin{itemize}
    \item \textbf{MaxImg}, a GSMR scheme that maximizes the number of uploaded images.
\end{itemize}
In particular, we consider the case of $P=8$\,mW and $\mathcal{K}=30$\,dB.
It can be seen from Fig.~\ref{fig:maximg}a and Fig.~\ref{fig:maximg}b that the MaxImg scheme uploads more images than our method. 
However, the MR loss, PSNR, and SSIM performances of the MaxImg scheme are worse than those of our APO, as shown in Table II.
This is because the MaxImg waste resources on frames with high GS rendering qualities. 
These frames should have been rendered at the server side without any image uploading. 
Our method is aware of the GS model and \textbf{focus on the frames that cannot be generated by the GS model}. 
We also provide the quantitative results for the cases of $P=\{5,6,7\}$\,mW in Table II.
The APO method consistently outperforms MaxImg by a large margin.

\subsection{Experiment 4: Evaluation of Robust GSCLO}

\begin{figure}[!t]
	\centering
	\begin{subfigure}{0.49\linewidth}
		\centering
		\includegraphics[width=1\linewidth]{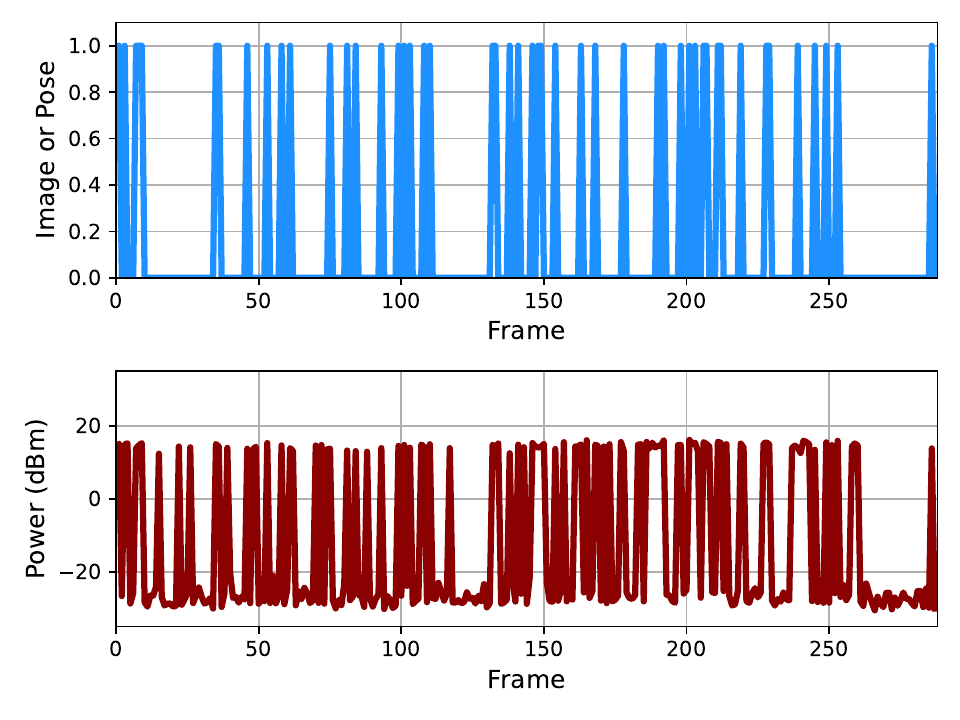}
		\caption{APO-based GSCLO.}
	\end{subfigure}
 	\begin{subfigure}{0.49\linewidth}
		\centering
		\includegraphics[width=1\linewidth]{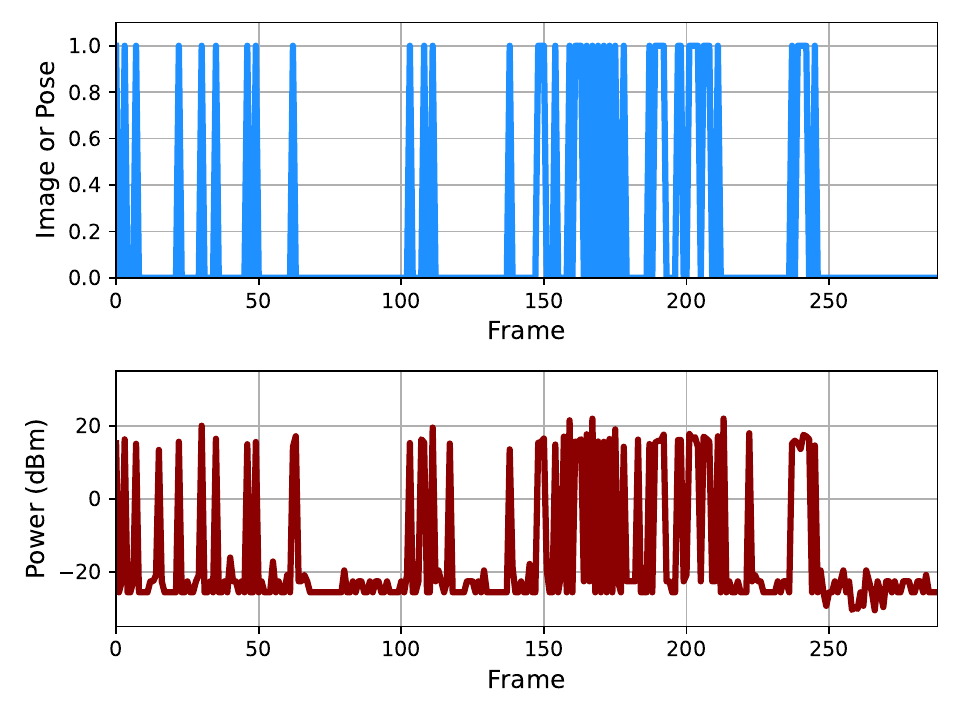}
		\caption{BILS-based robust GSCLO.}
	\end{subfigure}
	\caption{Content switching $\{x_t\}$ and power profiles $\{p_t\}$ under channel uncertainties.}
	\label{robust_demo}
        
\end{figure}

\begin{figure}[t]
    \centering
    \includegraphics[width=0.35\textwidth]{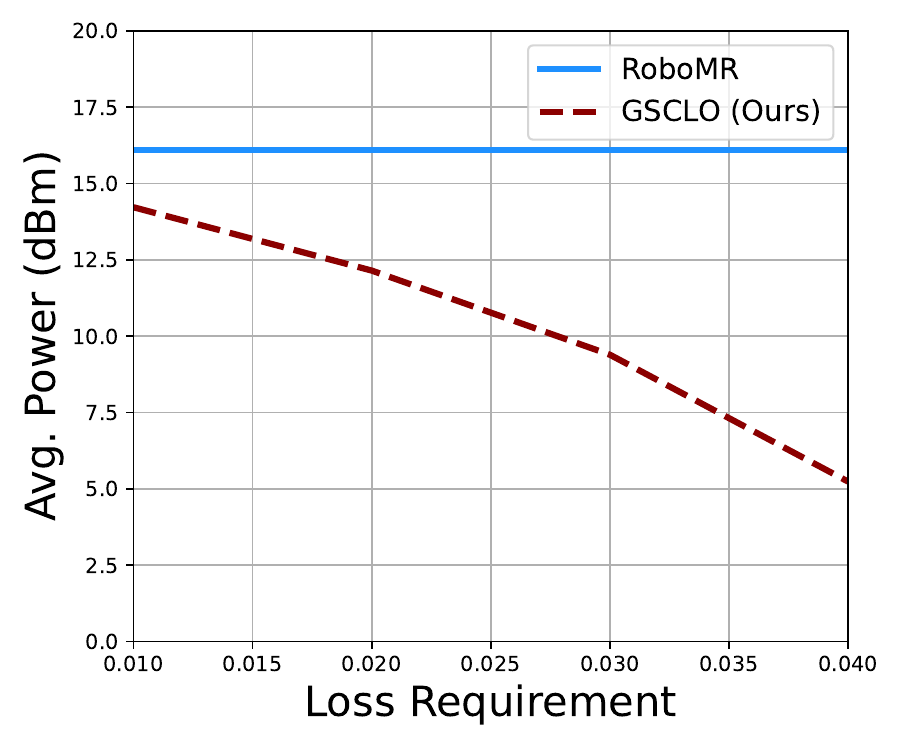}
    \caption{Power consumption.}
    \label{fig:power}
    
\end{figure}

\begin{figure*}[t]
    \centering
    \includegraphics[width=0.98\textwidth]{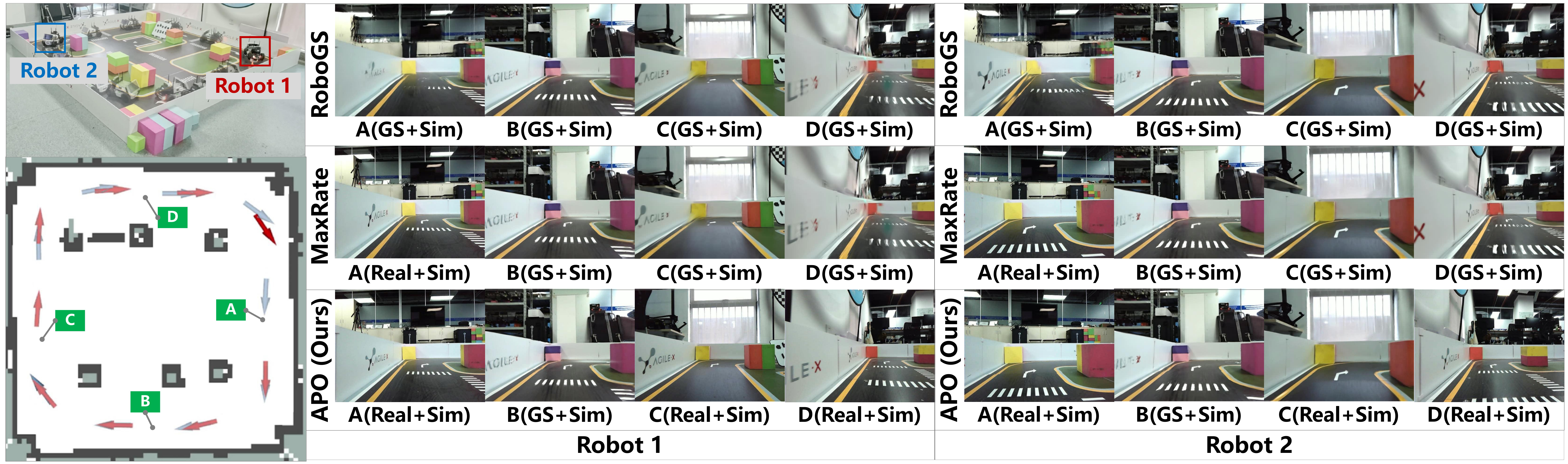}
    \caption{The multi-robot GSMR experiment. Left-hand side: Image positions in the multi-robot MR scenario, where red arrow denotes robot 1 and blue arrow denotes robot 2. Right-hand side: Visualization of robots' images.}
    \label{fig:multi-demo}
\end{figure*}

\begin{table*}[!t]
    \centering
    \caption{Comparison of GS, MaxRate, and APO in Multi-Robot Scenarios}
    \label{ssim}
    \scalebox{0.95}{
    \begin{tabular}{ccccccccccc}
        \hline
        Power & Robot &  \multicolumn{3}{c}{Loss}  & \multicolumn{3}{c}{PSNR} &  \multicolumn{3}{c}{SSIM} \\
        \cline{3-5} \cline{6-8} \cline{9-11} 
        $P$ & ID & GS & MaxRate & APO (Ours) & GS & MaxRate & APO (Ours) & GS & MaxRate & APO (Ours) \\
        \hline
        \multirow{2}{*}{5\,mW}
         & 1 & $0.0987$ & $0.0967$ & $\mathbf{0.0492}$ & $16.676$ & $17.309$ & $\mathbf{33.529}$ & $0.874$ & $0.877$ & $\mathbf{0.937}$ \\
         & 2 & $0.173$ & $0.170$ & $\mathbf{0.0848}$ & $13.258$ & $13.850$ & $\mathbf{31.137}$ & $0.785$ & $0.788$ & $\mathbf{0.896}$ \\
        \multirow{2}{*}{10\,mW} & 1 & $0.0987$ & $0.0779$ & $\mathbf{0.0298}$ & $16.676$ & $23.379$ & $\mathbf{40.522}$ & $0.874$ & $0.901$ & $\mathbf{0.961}$ \\
         & 2 & $0.173$ & $0.140$ & $\mathbf{0.0445}$ & $13.258$ & $20.058$ & $\mathbf{39.796}$ & $0.785$ & $0.826$ & $\mathbf{0.946}$ \\
        \hline
    \end{tabular}
    }
\end{table*}

\begin{figure*}[!t]
	\centering
	\begin{subfigure}{0.25\linewidth}
		\centering
		\includegraphics[width=1\linewidth]{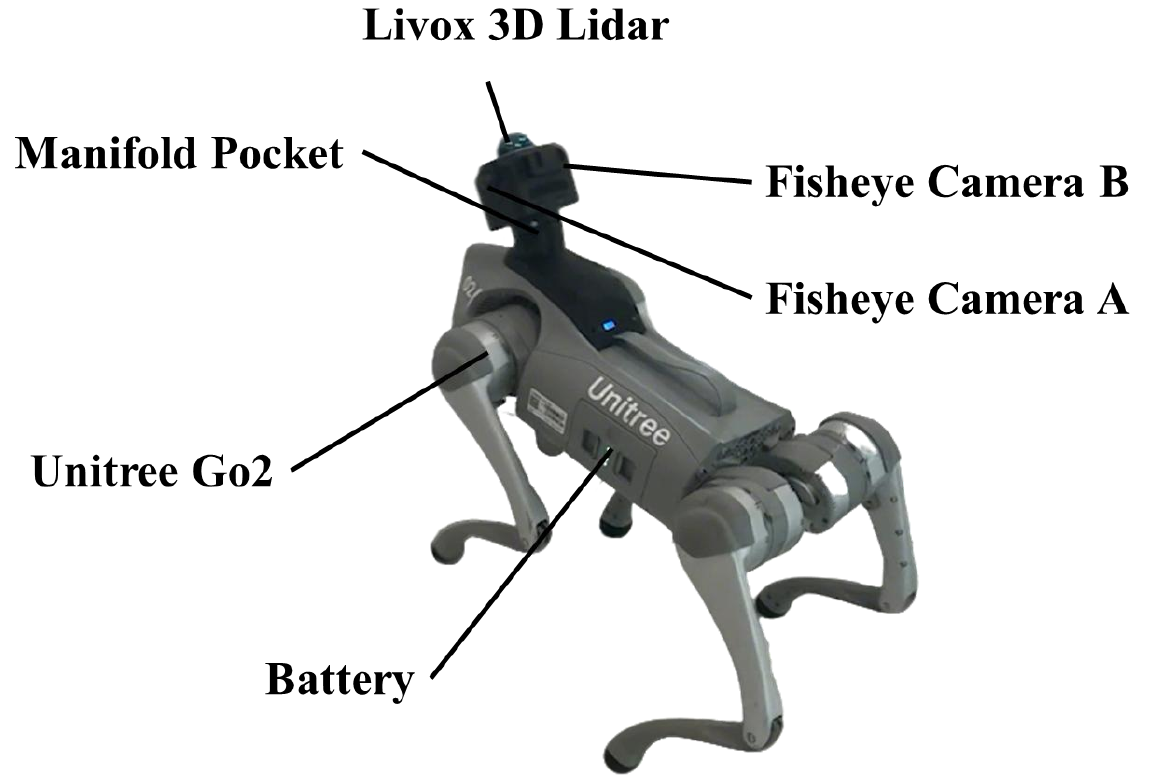}
        \caption{Unitree Go2 robotic platform.}
	\end{subfigure}
	\centering
	\begin{subfigure}{0.36\linewidth}
		\centering
		\includegraphics[width=1\linewidth]{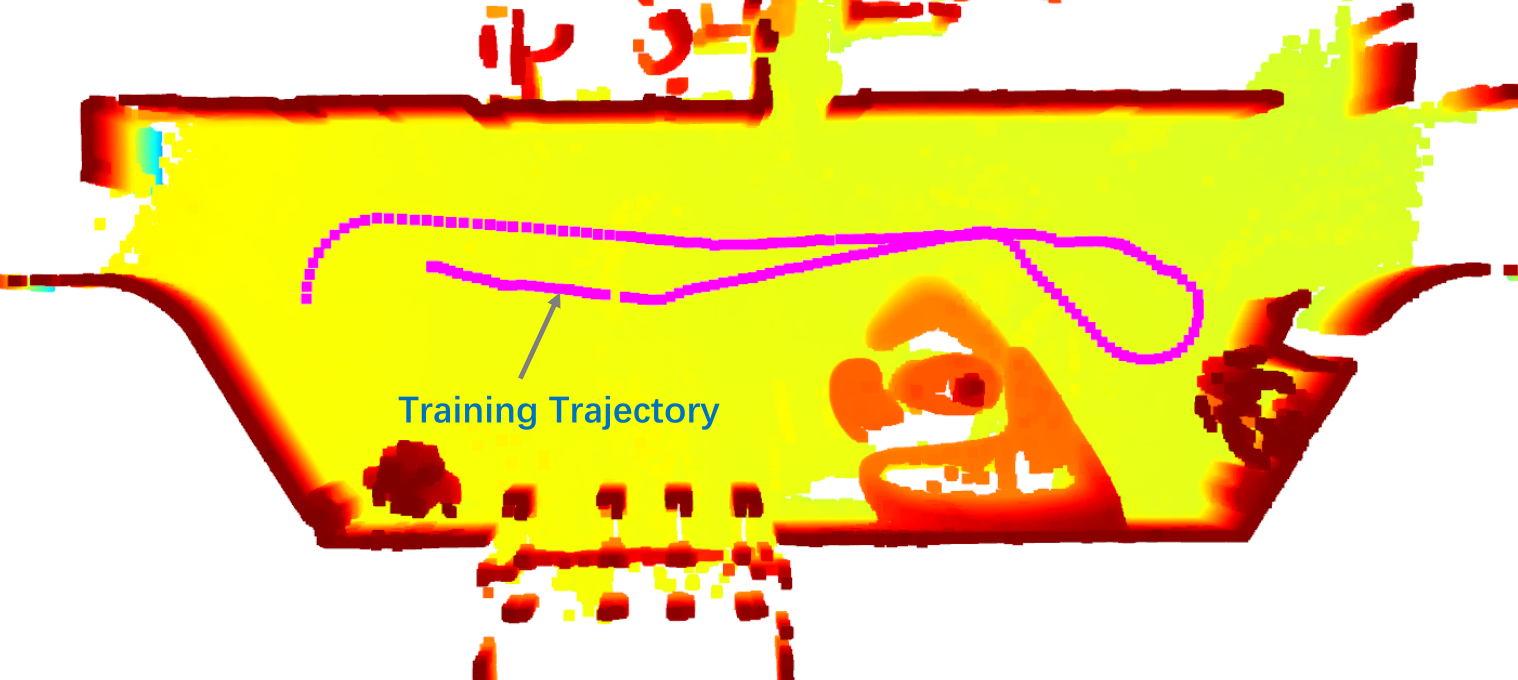}
        \caption{Robot training trajectory.}
	\end{subfigure}
	\centering
	\begin{subfigure}{0.34\linewidth}
		\centering
		\includegraphics[width=1\linewidth]{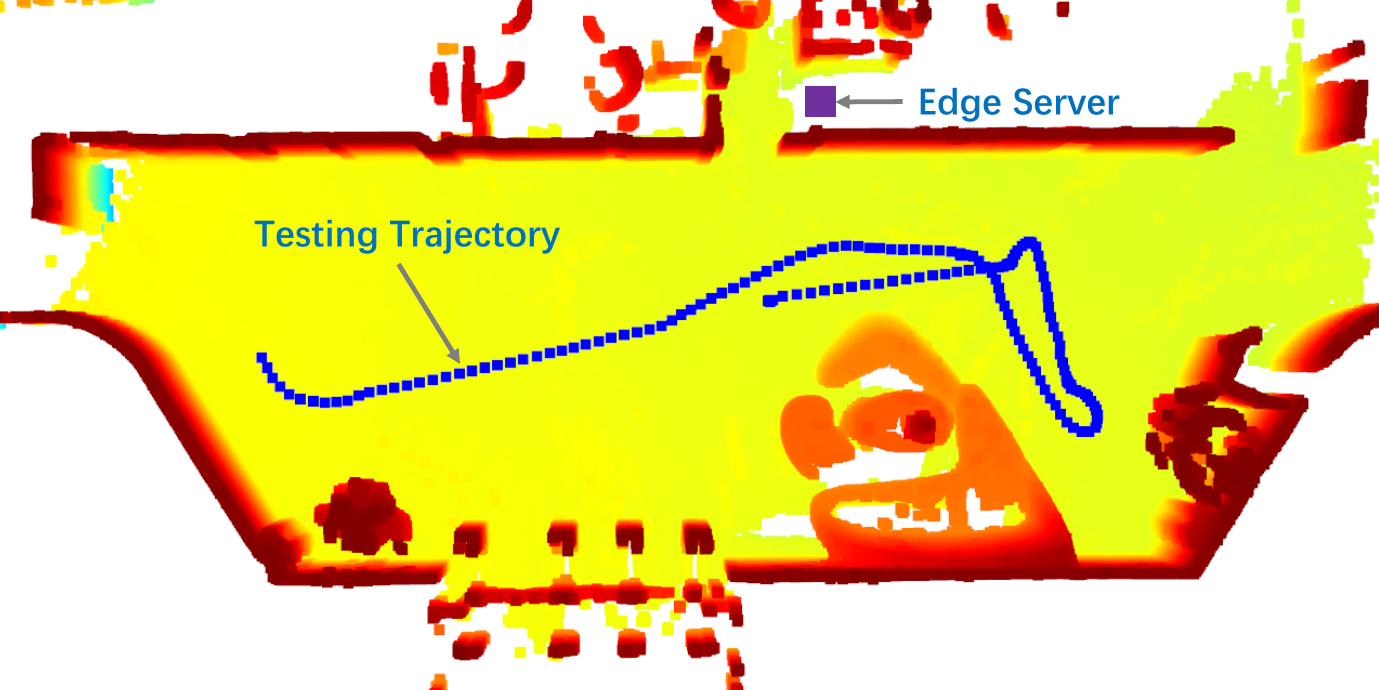}
        \caption{Robot testing trajectory.}
	\end{subfigure}
	\caption{The robotic dog platform and its associated trajectories for training and testing.}
     \label{fig:dog}
     
\end{figure*}

\begin{figure}[!t]
	\centering
	\begin{subfigure}{0.49\linewidth}
		\centering
		\includegraphics[width=1\linewidth]{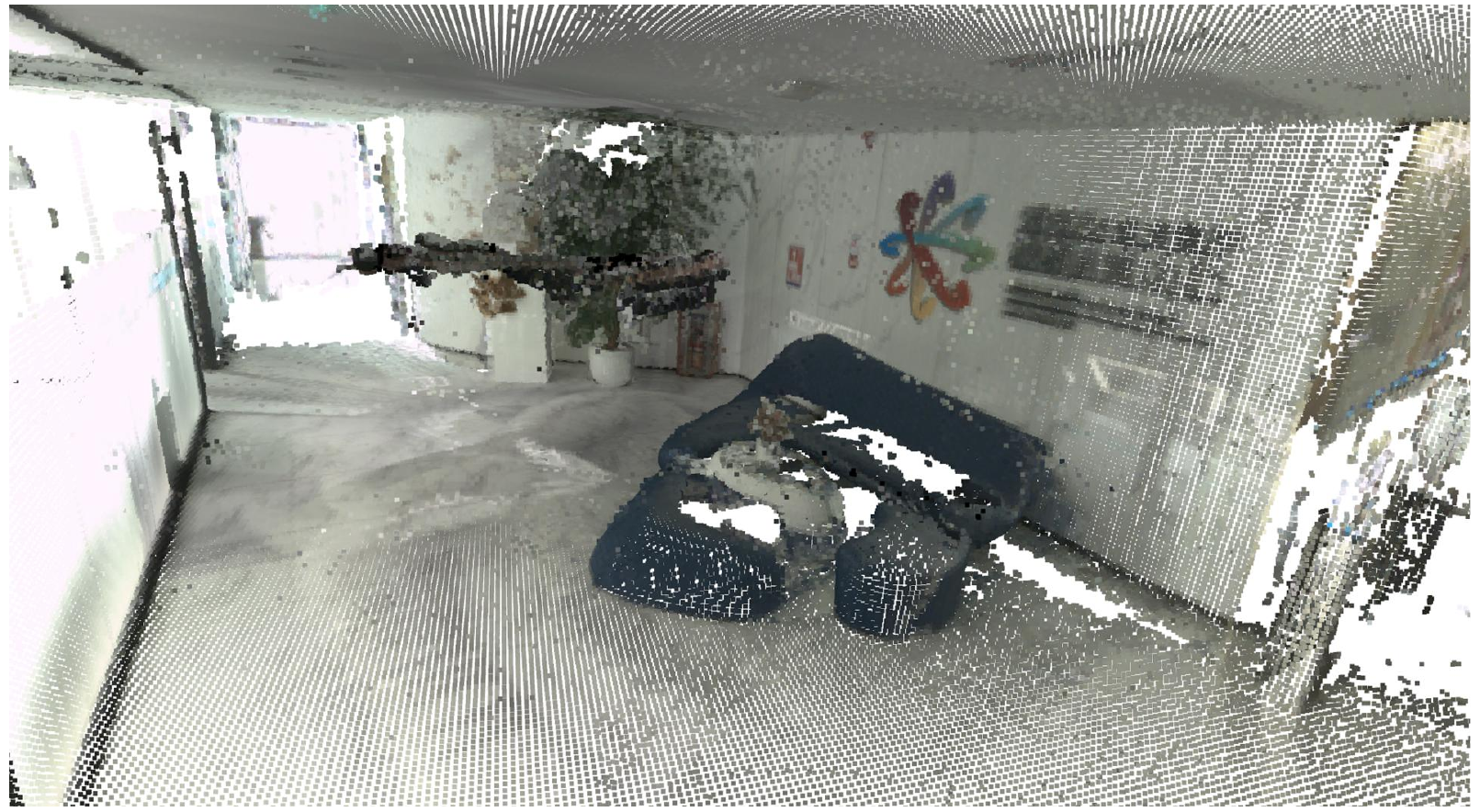}
	\end{subfigure}
	\centering
	\begin{subfigure}{0.49\linewidth}
		\centering
		\includegraphics[width=1\linewidth]{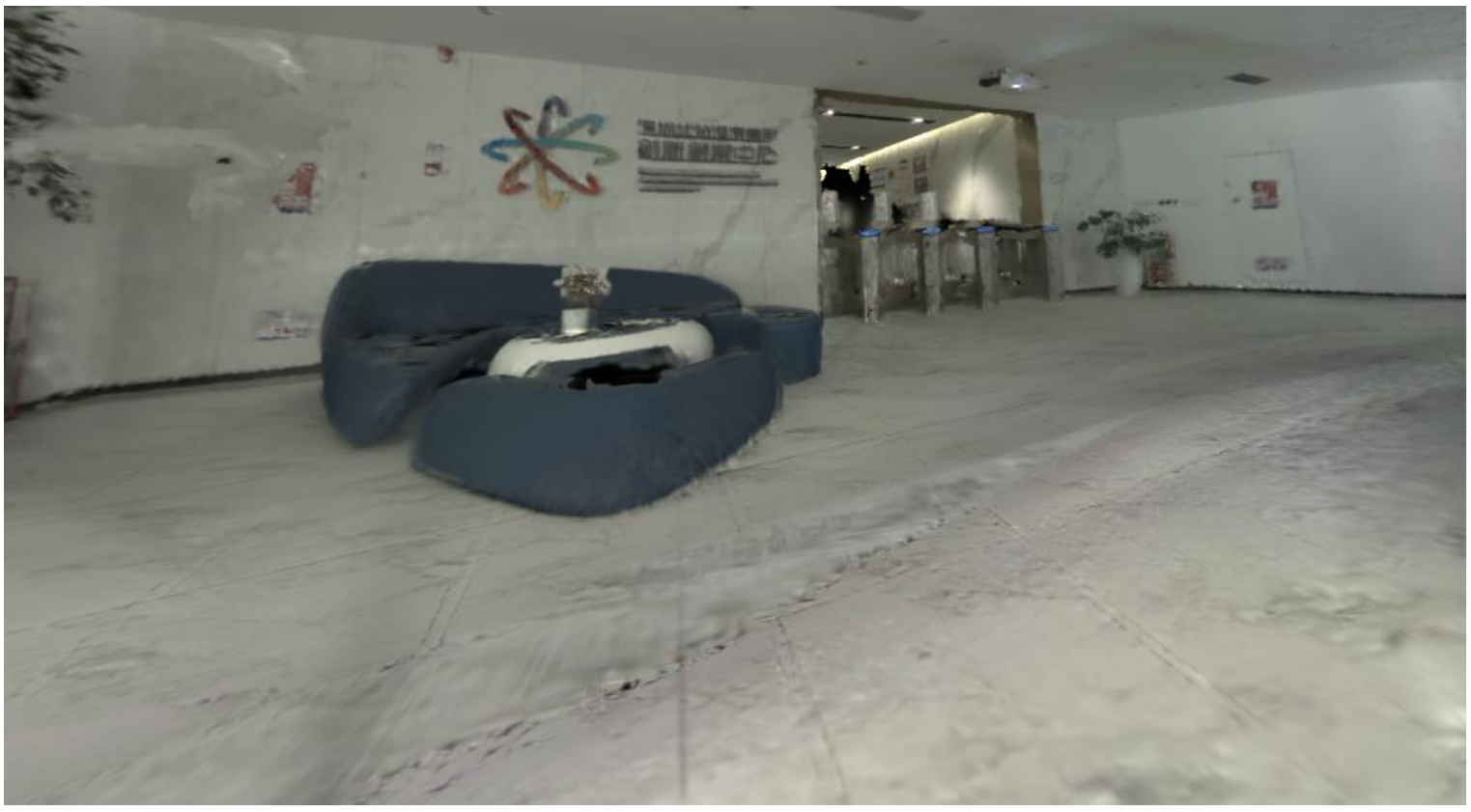}
	\end{subfigure}
	\caption{The GS point cloud and 3D GS model.}
     \label{fig:dog_gs}
     
\end{figure}

\begin{figure*}[!t]
	\centering
	\begin{subfigure}{0.32\linewidth}
		\centering
		\includegraphics[width=1\linewidth]{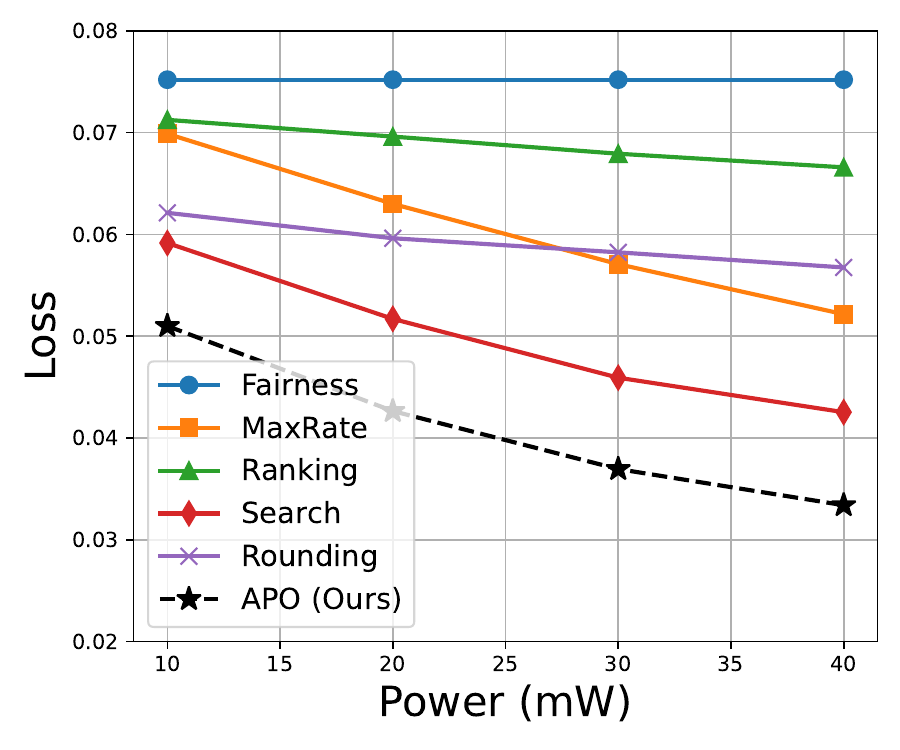}
		\caption{Loss versus $P$.}
	\end{subfigure}
 	\begin{subfigure}{0.32\linewidth}
		\centering
		\includegraphics[width=1\linewidth]{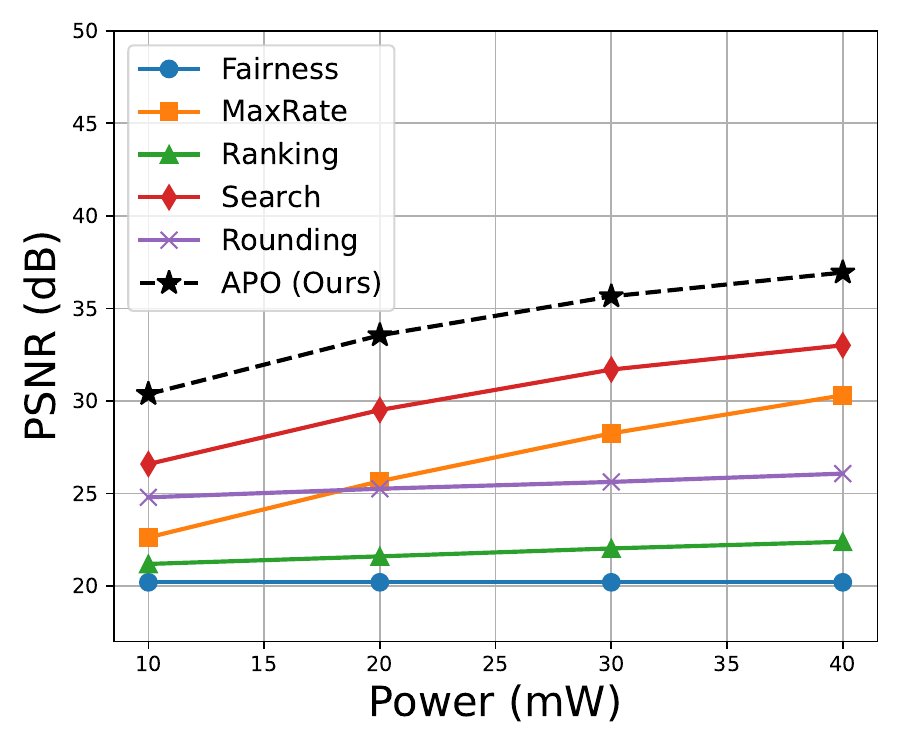}
		\caption{PSNR versus $P$.}
	\end{subfigure}
     \begin{subfigure}{0.32\linewidth}
		\centering
		\includegraphics[width=1\linewidth]{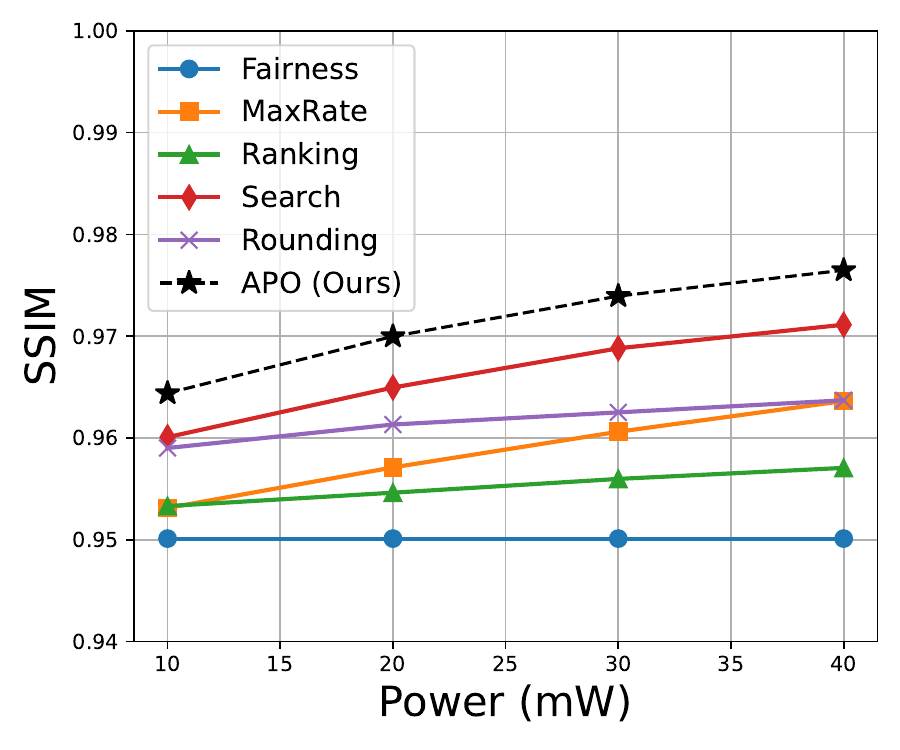}
		\caption{SSIM versus $P$.}
	\end{subfigure}
	\caption{Quantitative comparison between the proposed APO and other benchmarks for the robotic dog experiment 7.}
	\label{dog_compare}
\end{figure*}

\begin{figure*}[t]
    \centering
    \includegraphics[width=0.98\textwidth]{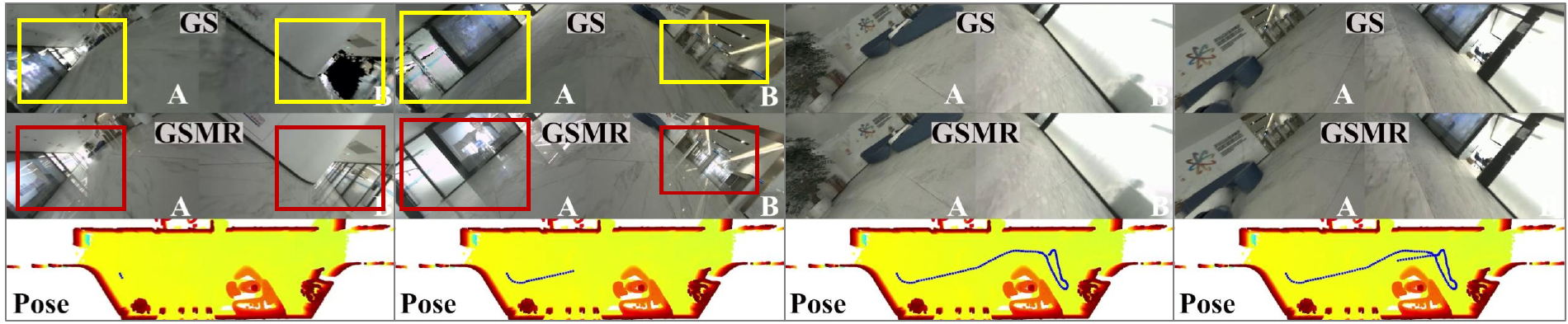}
    \caption{Visualization for the robotic dog experiment.}
    \label{dog_demo}
\end{figure*}

To evaluate the robustness of GSCLO against channel uncertainties, we compare the proposed BILS in Algorithm 2 of Section V and the APO in Algorithm 1 of Section IV. 
We consider the case of $P=\{10,20,30,40\}$\,mW and $\mathcal{K}=10$\,dB. 
The mean of channel $\Tilde{h}_t$ is generated using \eqref{ht}, 
and the channel uncertainty is set to $\omega^2=0.04 |\Tilde{h}_t|^2 $. 
The threshold of chance constraint \eqref{Pc_GS'} is set to $\epsilon=0.1$.
First, by comparing Fig.~\ref{compare2} and Fig.~\ref{robust}, it can be seen that the performance of the proposed APO is degraded when there exists errors in channel estimation.
Second, the proposed robust GSCLO with BILS (i.e., Algorithm 2) consistently outperforms the proposed GSCLO with APO (i.e., Algorithm 1), in terms of both loss and PSNR performances. 
In particular, the loss is reduced by about $10\%$, and the PSNR is improved by about $2$\,dB.
This is because the BILS-based robust GSCLO \textbf{boosts the transmit powers of key frames to ensure their successful transmission}, by reducing the number of image uploads as shown in Fig.~\ref{robust_demo}. 
In contrast, the APO-based GSCLO allocates exact transmit powers satisfying constraint \eqref{Pc_GS}, which may fail in satisfying \eqref{Pc_GS} when $h_{t} = \Tilde{h}_{t} + \Delta h_t$ as in \eqref{uncertainty}, thus failures of transmission when the channel is deteriorated. 
This also corroborates Fig.~\ref{robust}c, where the packet loss probabilities of the proposed robust GSCLO with BILS are strictly below $\epsilon=0.1$, while the APO GSCLO leads to higher packet loss probabilities.
The above results demonstrate that the proposed BILS-based robust GSCLO is more resilient to channel uncertainties than the APO-based GSCLO.

\subsection{Experiment 5: Power Minimization GSCLO}

To evaluate the power saving brought by GSCLO, we compare the proposed power minimization GSCLO in Section VI-A and RoboMR. 
The average transmit power versus the MR loss requirement $L_{\mathrm{th}}$ at $\mathcal{K}=30$\,dB is shown in Fig.~\ref{fig:power}.
According to \cite{kerbl20233d}, to ensure satisfactory images, the PSNRs should be controlled between $30$\,dB and $40$\,dB. 
Further leveraging the results in Experiment 2, a threshold $L_{\mathrm{th}}$ between $0.01$ and $0.04$ would be a proper choice for constraining the MR loss in practice. 
Under $L_{\mathrm{th}}\in[0.01,0.04]$, it can be seen from Fig.~\ref{fig:power} that the power saved by GS is at least $2$\,dB.
Particularly, when the loss threshold is $0.03$, the power saved by GSCLO is larger than $6$\,dB, i.e., equal to $75\%$ power reduction.
This result reveals that \textbf{robotic mixed reality can be achieved with ultra-low communication costs ($<200$\,bits/frame) if Gaussian splatting is leveraged properly}. 
With ultra-low communication costs, it is possible to connect massive robots with the virtual world, forming a metaverse space.

\subsection{Experiment 6: Evaluation of Multi-Robot GSCLO}

To evaluate the performance of the proposed method in Section VI-B under multi-robot settings, we consider the case of $N=4$ and $K=2$ with $T=433$ in the AGX scenario shown in Fig.~\ref{fig:multi-demo}.
We compare APO with the MaxRate and GS schemes under the settings of $P\in\{5,10\}$\,mW and $\mathcal{K}=0$\,dB.
The loss, PSNR, and SSIM results are shown in Table III.
Qualitative results are shown in Fig.~\ref{fig:multi-demo}.
It can be seen that the solution maintains efficiency with more robots, by mitigating the co-channel interference among different devices via multi-antenna techniques.
This also demonstrates that our method works well under longer operation time.

\subsection{Experiment 7: Robotic Dog in Larger Environments}

Finally, we conduct experiments using the Unitree Go2 robotic dog platform shown in Fig.~\ref{fig:dog}a in a larger environment. 
We navigated the dog for two rounds in an office building. 
In the first round, the robot collects $H=400$ samples (including images and poses) for training a GS model. 
The trajectory of the training data is shown in Fig.~\ref{fig:dog}b.
The obtained GS point cloud and GS model are shown in Fig.~\ref{fig:dog_gs}.
In the second round, the robot collects $T=370$ samples for evaluation.
The trajectory of the testing data is shown in Fig.~\ref{fig:dog}c.
It can be seen that the testing poses are random and different from the training poses. 
Each image has a data volume of $I=13.84$\,Mbits ($1.73$\,MBytes).

We simulate the case of $B=20$\,MHz in non-LoS settings, where the server is marked as purple box in Fig.~\ref{fig:dog}c. 
The channel is assumed to be Rayleigh fading, i.e.,
\begin{align}
&h_t=\sqrt{\varrho_0 \omega_0 d_t^{-\alpha}}g_t^{\mathrm{NLOS}},
\end{align}
where $\omega_0=-10\,$dB is the wall blockage fading, $d_t$ is computed based on the robot trajectory and the server position. 
The loss, PSNR, and SSIM results are shown in Fig.~\ref{dog_compare}. 
It can be seen that no matter how power budget changes, the proposed APO outperforms other benchmarks by a large margin in terms of all the performance metrics.

Qualitative results are shown in Fig.~\ref{dog_demo}, where captions A and B represent fisheye cameras A and B.
Inside the view of camera A, the screen is playing video and the environment is changing. 
It can be seen that all the images of our APO-based GSMR have high qualities.
This is because if the robot detects a significant discrepancy between the real captured image and the GS rendered view (i.e., “memory bias”)\footnote{
In practice, such discrepancies can be estimated using pilot images.}, our method would trigger an image upload. 
In contrast, the existing GS scheme works poorly in dynamic scenarios. 
This is due to the inherent limitation of static reconstruction. 
This result suggests that \textbf{mixture of data is useful for enhancing GS performance in dynamic scenarios}.

\section{Conclusion}\label{section8}

This paper presented GSMR, which realizes low-cost communication between the simulation server and the robot via the introduction of GS. 
A GSCLO framework was proposed to mitigate the discrepancies between GS-rendered images and real-world environments.
Extensive experiments were conducted, which demonstrate the effectiveness, stability, and robustness of GSMR on wheeled-robot, legged-robot, and multi-robot platforms in various scenarios. 
It is found that the GS model indeed helps MR, but we need to carefully avoid memory bias. 
Furthermore, under channel uncertainties, we should boost transmit powers of key frames, so as to ensure successful pose and image uploads. 

\bibliographystyle{IEEEtran}

\end{document}